\theoremstyle{definition}
\newcommand{\norm}[1]{\| #1 \|}
\newcommand{\bbP}{\mathbb{P}}
\newcommand{\cF}{\mathcal{F}}
\newcommand{\RR}{{\mathbb{R}}}
\newcommand{\EE}{{\mathbb{E}}}
\newcommand{\ZZ}{{\mathbb{Z}}}
\newcommand{\II}{{\mathbb{I}}}
\newcommand{\VV}{{\mathbb{V}}}
\newcommand{\calU}{{\mathcal{U}}}
\newtheorem{lemma}{Lemma}
\newtheorem{definition}{Definition}
\newtheorem{assumption}{Assumption}
 \newcommand{\defeq}{:=}
\newcommand{\bellman}[1]{\mathcal{T}#1}
\newcommand{\hatbellman}[1]{\mathcal{\hat T}#1}
\newcommand{\mynorm}[1]{\left\lVert#1\right\rVert}
\newcommand{\cS}{\mathcal{S}}
\newcommand{\cA}{\mathcal{A}}
\newcommand{\calF}{\mathcal{F}}
\definecolor{darkred}{rgb}{0.7, 0.0, 0.0}
\definecolor{darkgreen}{rgb}{0.0, 0.7, 0.0}
\begin{document}

\title{Exploiting Action Impact Regularity and Exogenous State Variables for Offline Reinforcement Learning}

\author{\name Vincent Liu \email vliu1@ualberta.ca \\
       \name James R. Wright \email james.wright@ualberta.ca \\
       \name Martha White \email whitem@ualberta.ca \\
       \addr University of Alberta and Alberta Machine Intelligence Institute (Amii)\\
       Edmonton, Alberta, Canada
}

\maketitle

\begin{abstract}
Offline reinforcement learning---learning a policy from a batch of data---is known to be hard for general MDPs. These results motivate the need to look at specific classes of MDPs where offline reinforcement learning might be feasible. In this work, we explore a restricted class of MDPs to obtain guarantees for offline reinforcement learning. The key property, which we call Action Impact Regularity (AIR), is that actions primarily impact a part of the state (an endogenous component) and have limited impact on the remaining part of the state (an exogenous component). AIR is a strong assumption, but it nonetheless holds in a number of real-world domains including financial markets. We discuss algorithms that exploit the AIR property, and provide a theoretical analysis for an algorithm based on Fitted-Q Iteration. Finally, we demonstrate that the algorithm outperforms existing offline reinforcement learning algorithms across different data collection policies in simulated and real world environments where the regularity holds.
\end{abstract}

\section{Introduction}
Offline reinforcement learning (RL) involves using a previously collected static dataset, without any online interaction, to learn an output policy. This problem setting is important for a variety of real world problems where learning online can be dangerous, such as for self-driving cars, or when building a good simulator is difficult or costly, such as for healthcare. It is also a useful setting for applications where there is a large amount of data available, such as dynamic search advertising.

A challenge in offline RL is that the quality of the output policy can be highly dependent on the data.
Most obviously, the data might not cover some parts of the environment, resulting in two issues.
The first is that the learned policy, when executed in the environment, is likely to deviate from the behavior that generated its training data and reach a state-action pair that was unseen in the dataset. For these unseen state-action pairs, the algorithm has no information about how to choose a good action. The second issue is that 
if the dataset does not contain transitions in the high-reward regions of the state-action space, it may be impossible for any algorithm to return a good policy. One can easily construct a family of MDPs with missing data such that no algorithm can identify the MDP and suffer a large suboptimality gap \cite{chen2019information}.  

The works that provide guarantees on the suboptimality of the output policy usually rely on strong assumptions about good data coverage and mild distribution shift. The theoretical results are for methods based on approximate value iteration (AVI) and approximate policy iteration (API), with results showing the output policy is close to an optimal policy \cite{farahmand2010error,munos2003error,munos2005error,munos2007performance}.
They assume a small concentration coefficient, which is the ratio between the state-action distribution induced by any policy and the data distribution \cite{munos2003error}. 
However, the concentration coefficient can be very large or even infinite in practice, so assuming a small concentration coefficient can be unrealistic for many real world problems. 
Different measures of distribution shift are also used, for example, \citeA{yin2021near} assume the visitation distribution of the least occupied state-action pair is greater than zero.

To avoid making strong assumptions on the concentration coefficient, several works consider constraining divergence between the behavior policy and the output policy on the policy improvement step for API algorithms. The constraints can be enforced either as direct policy constraints or by a penalty added to the value function~\cite{levine2020offline,wu2019behavior}. 
Another approach is to constrain the policy set such that it only chooses actions or state-action pairs with sufficient data coverage when applying updates for AVI algorithms \cite{kumar2019stabilizing,liu2020provably}. However, these methods only work when the data collection policy covers an optimal policy (see our discussion in Section \ref{sec:ass}), which can be difficult or impossible to guarantee. 

Another direction has been to assume pessimistic values for unknown state-action pairs, to encourage the agent to learn an improved policy that stays within the parts of the space covered by the data.
CQL \cite{kumar2020conservative} penalizes the values for out-of-distribution actions and learns a lower bound of the value estimates. 
A related idea is to constrain the bootstrap target to avoid out-of-distribution actions, introduced first in the BCQ algorithm \cite{fujimoto2019off} with practical improvements given by IQL \cite{kostrikov2021offline}.
MOReL \cite{kidambi2020morel} learns a model and an unknown state-action detector to partition states similar to the R-Max algorithm \cite{brafman2002r}, but then uses the principle of pessimism for these unknown states rather than optimism.
Safe policy improvement methods~\cite{thomas2015high,laroche2019safe} rely on a high-confidence lower bound on the output policy performance, performing policy improvement only when the performance is higher than a threshold. 

In practice, the results of pessimistic approaches are mixed. Some have been shown to be effective on the D4RL dataset \cite{fu2020d4rl}. 
Other results, however, show methods can be too conservative and fail drastically when the behavior policy is not near-optimal \cite{kumar2020conservative,liu2020provably}, as we reaffirm in our experiments. 
Further, actually using these pessimistic methods can be difficult, as their hyper-parameters are not easy to tune in the offline setting \cite{wu2019behavior} and some methods require an estimate of the behavior policy or data distribution \cite{kumar2019stabilizing,liu2020provably}.  

\begin{figure}
    \centering
\includegraphics[width=0.8\linewidth]{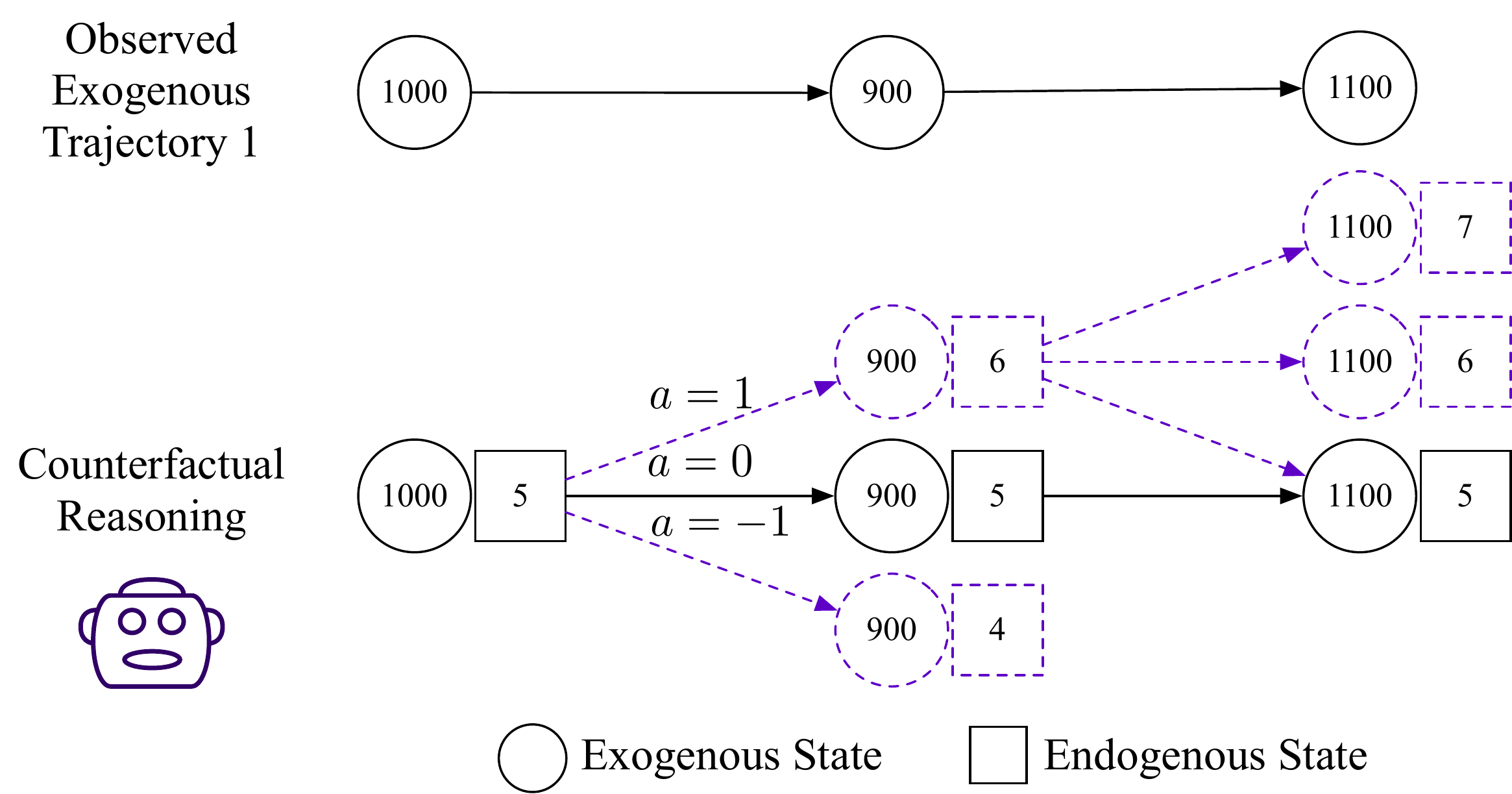}
    \vspace{-0.4cm}
    \caption{In the stock market example, the exogenous state corresponds to the stock price, the endogenous state corresponds to number of shares the agent has, and the action corresponds to the number of share to buy or sell at each time step. Given an observed exogenous trajectory, the agent can counter-factually reason about the outcomes of different actions and endogenous state.}
    \label{fig:exo_diagram}
        \vspace{-0.4cm}
\end{figure}

Intuitively, however, there are settings where offline RL should be effective. Consider a trading agent in a stock market. A policy that merely observes stock prices and volumes without buying or selling any shares provides useful information about the environment. For this collected dataset, an offline agent can counter-factually reason about the utility of many different actions as demonstrated in Figure \ref{fig:exo_diagram}, because its actions have limited impact on the prices and volumes. 
Such MDPs, which are called Exogenous MDPs, have states that separate into exogenous states (stock price), not impacted by actions, and endogenous states (number of shares owned by the agent). 
The structure in Exogenous MDPs has been used in online RL to learn more efficiently \cite{dietterich2018discovering}.

This exogenous structure, however, has yet to be formally investigated for offline RL, though it is likely already being exploited in industry. Exploiting this structure is natural in applied financial applications, because datasets allow for alternative trajectories to be simulated, as described in the above example. One (unpublished) system uses RL and trajectory simulation for the optimal order execution problem \cite{aiden2020}; it seems likely that there are other such systems in use. What has yet to be done, however, is to understand the theoretical properties of such algorithms, as well as potential algorithmic improvements.

In this paper, we first generalize the definition of exogenous MDPs, and formalize the action impact regularity (AIR) property. We say an MDP has the AIR property---or is $\varepsilon$-AIR---if the actions have a limited impact on exogenous dynamics, with the level of impact determined by $\varepsilon \ge 0$. This generalizes the previous definition, which required strict separation, corresponding to $\varepsilon = 0$. We develop both theory and algorithms for this more general setting, assuming access only to an offline dataset, an approximate (learned) endogenous model and the reward function.\footnote{We could assume an approximate instead of exact reward model. However, in most RL analysis, the error on the transition model is of a higher order than the error for reward models (for example, see \citeA{agarwal2020model}). For simplicity, it is often assumed the reward model is known.} 

We design an efficient algorithm, called FQI-AIR, to exploit the AIR property, that (1) does not require an estimate of the behavior policy or the data distribution, (2) has a straightforward approach to select hyperparameters using just the given data and (3) is much less sensitive to the quality of the data collection policy, if our assumptions hold. This algorithm is a simple extension of FQI, but is significantly more computationally efficient than the trajectory simulation approach mentioned above and allows us to leverage and extend the existing theory for FQI. We bound the suboptimality of the output policy from FQI-AIR, in terms of $\varepsilon$ and other standard terms (model errors and the inherent Bellman error, see Section \ref{sec_main_theory}). Importantly, in place of the concentration coefficient, we have a term that depends on the size of the endogenous state and number of actions; when the concentration coefficient is bounded and small, it is on the same order as this term. 

We then conduct a comprehensive empirical study of FQI-AIR. We compare several algorithms in two simulated environments, across three different data collection policies, with varying offline dataset sizes, for $\varepsilon = 0$ (assumption perfectly satisfied) and a larger $\varepsilon$ (assumption somewhat violated). FQI-AIR significantly outperforms the offline RL algorithms that do not leverage the AIR property---including FQI, MBS-QI, CQL and IQL; this outcome is expected, but nonetheless verifies that exploiting the AIR property, when appropriate, can have a big benefit. We show that these conclusions extend to two environments based on real-world datasets (for bitcoin trading and for controlling battery usage in a hybrid car). An important detail here is how hyperparameters are chosen. FQI-AIR can exploit AIR for policy evaluation, to automatically select hyperparameters. For the other algorithms, we do not have such an approach, and instead report idealized performance by picking hyperparameters based on performance in the environment. 

Finally, these results all used the true endogenous model for FQI-AIR. We chose to do so partly because the endogenous model is known for certain AIR-MDPs (e.g., trading, inventory management) and partly to focus the investigation on the role of $\varepsilon$ rather than model error. However, for certain AIR-MDPs, we will not have access to the true endogenous model. In our final experiment, we investigated the impact of using a learned endogenous model in the hybrid car environment, and show that FQI-AIR remains effective.

All of this is only possible because we make a strong assumption about the environment. However, given the hardness results in offline RL, we should acknowledge that we likely need to restrict the class of MDPs. This work is a step towards understanding for what classes of MDPs offline RL is feasible. At the same time, though we consider a restricted setting, it is by no means a trivial setting. There are many real-world examples where this regularity holds (as we discuss later in this work). This is doubly true given that our generalization provides some flexibility in violating the assumption: the regularity only needs to hold approximately rather than exactly. The algorithms and theory developed here can benefit these real-world applications now, by providing an approach that is well-designed and well-behaved with strong theoretical guarantees for their specific problem setting.

\section{Problem Formulation}
\newcommand{\rmax}{{r_{max}}}
\newcommand{\vmax}{{v_{max}}}
\newcommand{\tenv}{\text{exo}}
\newcommand{\tpri}{\text{end}}
\newcommand{\env}[1]{#1^\tenv}
\newcommand{\private}[1]{#1^\tpri}
\newcommand{\Senv}{\mathcal{S}^{\tenv}}
\newcommand{\Spri}{\mathcal{S}^{\tpri}}
\newcommand{\senv}{\env{s}}
\newcommand{\spri}{\private{s}}
\newcommand{\spenv}{s'^\tenv}
\newcommand{\sppri}{s'^\tpri}
\newcommand{\Penv}{P^\tenv}
\newcommand{\Ppri}{P^\tpri}
\newcommand{\TPenv}{\Tilde P^\tenv}
\newcommand{\TPpri}{\Tilde P^\tpri}
\newcommand{\HPenv}{\Hat P^\tenv}
\newcommand{\HPpri}{\Hat P^\tpri}
\newcommand{\varapx}{\varepsilon_{apx}}
\newcommand{\varair}{\varepsilon_{air}}
\newcommand{\varp}{\varepsilon_{p}}

The agent-environment interaction is
formalized as a finite horizon Markov decision process (MDP) $M=(\cS, \cA, P, r, H, \nu)$. $\cS$ is a set of states, and $\cA$ is an set of actions; for simplicity, we assume that both sets are finite. $P:\cS\times\cA\to\Delta(\cS)$ is the transition probability where $\Delta(\cS)$ is the set of probability distributions on $\cS$, and by slightly abusing notation, we will write $P(s,a,s')$ as the probability that the process will transition into state $s'$ when in state $s$ it takes action $a$. The function $r:\cS\times\cA\to[0, \rmax]$ gives the reward when taking action $a$ in state $s$, where $\rmax \in \mathbb{R}^+$. $H\in\ZZ^+$ is the planning horizon, and $\nu\in\Delta(\cS)$ the initial state distribution. 

In the finite horizon setting, the policies are non-stationary. A non-stationary policy is a sequence of memoryless policies $(\pi_{0},\dots,\pi_{H-1})$ where $\pi_h:\cS\to\Delta(\cA)$. We assume that the set of states reachable at time step $h$, $\cS_h\subset\cS$, are disjoint, without loss of generality, because we can always define a new state space $\cS' = \cS \times [H-1]$ where $[n]\defeq\{0,1,2,\dots,n\}$. Then, it is sufficient to consider stationary policies $\pi:\cS\to\Delta(\cA)$. 

Given a policy $\pi$, $h\in[H-1]$, and $(s,a)\in\cS\times\cA$, we define the value function and the action-value function as $v_h^\pi(s) \defeq \EE^\pi[\sum_{t=h}^{H-1} r(S_t,A_t)|S_h\!=\!s]$ and $q_h^\pi(s,a) \defeq \EE^\pi[\sum_{t=h}^{H-1} r(S_t,A_t)|S_h\!=\!s,A_h\!=\!a]$ where the expectation is with respect to $\bbP^\pi_M$ (we may drop the subscript $M$ when it is clear from the context). $\bbP^\pi$ is the probability measure on the random element in $(\cS\times\cA)^H$ induced by the policy $\pi$ and the MDP such that, for the trajectory of state-action pairs $(S_0,A_0,\dots,S_{H-1},A_{H-1})$, we have $\bbP^\pi(S_0=s)=\nu(s)$,  $\bbP^\pi(A_t\!=\!a|S_0,A_0,\dots,S_{t})\!=\!\pi(a|S_t)$, and $\bbP^\pi(S_{t+1}\!=\!s'|S_0,A_0,\dots,S_t,A_t)=P(S_t,A_t,s')$ for $t\geq0$ \cite{lattimore2020bandit}. 
The optimal value function is defined by $v_h^*(s) \defeq \sup_{\pi} v_h^\pi(s)$, and the Bellman operator is defined by
\begin{align*}
    (\bellman q_{h})(s,a) = r(s,a) + \sum_{s'\in\cS} P(s,a,s') \max_{a'\in\cA} q_{h}(s',a').
\end{align*}
In the batch setting, we are given a fixed set of transitions $D$ with samples drawn from a data distribution $\mu$ over $(\cS\times\cA)$. In this paper, we consider the setting where the data is collected by a data collection policy $\pi_b$. That is, $D$ consists of $N$ trajectories $(S_0^{(i)},A_0^{(i)},\dots,S_{H-1}^{(i)},A_{H-1}^{(i)})$ induced by the the interaction of the policy $\pi_b$ and MDP $M$.

A representative algorithm for the batch setting is \emph{Fitted Q Iteration} (FQI) \cite{ernst2005tree}. In the finite horizon setting, FQI learns an action-value function for each time step, $q_0,\dots,q_{H-1}\in\cF$ where $\cF\subseteq\RR^{\cS\times\cA}$ is the value function class. The algorithm is defined recursively from the end of the episode: for each time step $h$ from $H-1$ to $0$, $q_h = \arg\min_{q\in\cF} \norm{q - \hatbellman q_{h+1}}_2^2$ where $\hatbellman$ is defined by replacing expectations with sample averages for the Bellman operator $\bellman$ and $q_{H}=0$.
The output policy is obtained by greedifying according to these action-value functions.

\section{Common Assumptions for Offline RL}\label{sec:ass}
In this section we discuss common assumptions used in offline RL.
The most common assumptions typically concern properties of the data distribution and MDPs together: either that it sufficiently covers the set of possible transitions, or that it sufficiently covers a near-optimal policy.
However, these assumptions are often impractical for many real world applications.
Furthermore, recent results show that restriction on the data distribution alone are insufficient to obtain guarantees.
This motivates considering realistic assumptions on the MDP alone, as we do in this work.

Sufficient coverage of the data distribution has primarily been quantified by \emph{concentration coefficients} \cite{munos2003error}. Given a data distribution $\mu$, the concentration coefficient $C$ is defined to be the smallest value such that, for any policy $\pi$, 
\begin{equation*}
    \max_{h\in[H-1]}\max_{s\in\cS_h,a\in\cA_h} \frac{\bbP^\pi(S_h=s,A_h=a)}{\mu(s,a)} \leq C.
\end{equation*} 
If $\mu(s,a)=0$ for some $(s,a)$, then we define $C=\infty$ by convention. 
Several results bound the suboptimality of batch API and AVI algorithms in terms of the concentration coefficient~\cite{chen2019information,farahmand2010error,munos2003error,munos2007performance}. For example, it has been shown that FQI outputs a near-optimal policy when $C$ is small and the value function class is rich enough, where the upper bound on the suboptimality of the output policy scales linearly with $\sqrt{C}$. 

However, in practice, the concentration coefficient can be very large or even infinite. For example, if the data collection policy is not well-randomized or exploratory---often the case in practice---then the concentration coefficient is infinite due to missing some state-action pairs.  \citeA{munos2007performance} provides some intuition about the size of the concentration coefficient. 
Suppose that the data distribution is uniform (e.g., $\mu(s,a)=1/|\cS||\cA|$) and the environment transition probability is less uniform, that is, there exists some policies such that the visitation distribution concentrates on a single state-action pair (e.g., $\bbP^\pi(S_h=s,A_h=a)=1$ for some $s,a$ and $h$), then the concentration coefficient can be as large as the number of state-action pairs

Another direction has been to consider approaches where the data covers a near-optimal policy. The key idea behind these methods is to restrict the policy to choose actions that have sufficient data coverage, which is effective if the given data has near-optimal action selection. For example, BCQ \cite{fujimoto2019off} and BEAR \cite{kumar2019stabilizing} only bootstrap values from actions $a$ if the probability $\pi_b(a|s)$ is above a threshold $b$. 
MBS-QI \cite{liu2020provably} extends this to consider state-action probabilities, only bootstrapping from state-action pairs $(s,a)$ when $\mu(s,a)$ is above a threshold. 
The algorithm is modified from FQI by replacing the bootstrap value $q_h(s,a)$ by $\Tilde q_h(s,a) \defeq \II\{\mu(s,a) \geq b\} q_h(s,a)$ and the policy is greedy with respect to $\Tilde q_h(s,a)$. 
If a state-action pair does not have sufficient data coverage, its value is zero. They show that MBS-QI outputs a near-optimal policy if $\bbP^{\pi^*} (\mu(S_h,A_h)\! <\! b)$ is small for all $h\!\in\![H\!-\!1]$. That is, the data provides sufficient coverage for state-action pairs visited under an optimal policy $\pi^*$. 

Though potentially less stringent than having a small concentration coefficient, this assumption can be impractical. We may be able to satisfy this assumption in simulated environments, such as those in our experiments; in the real world, though, if we have a simulator we are unlikely to use offline RL. 
For many real world domains, optimal policies are unknown. In fact, one of the primary purposes of using offline RL is to get (significantly) improved policies. It is also hard to carefully design a data collection policy to cover an unknown optimal policy, making it difficult to even check whether this assumption holds. 

Finally, some recent negative results suggest it is not sufficient to have a good data distribution alone, and that it will be necessary to also make assumptions on the MDP. In particular, 
\citeA{chen2019information} showed that if we do not make assumptions on the MDP dynamics, no algorithm can achieve a polynomial sample complexity to return a near-optimal policy, even when the algorithm can choose any data distribution $\mu$. 
\citeA{wang2020statistical} provide an exponential lower bound for the sample complexity of off-policy policy evaluation and optimization algorithms with $q^\pi$-realizable linear function class, even when the data distribution induces a well-conditioned covariance matrix.
\citeA{zanette2020exponential} provide an example where offline RL is exponentially harder than online RL, even with the best data distribution, $q^\pi$-realizable function class and assuming the exact feedback is observed for each sample in the dataset. 
\citeA{xiao2021sample} provide an exponential lower bound for the sample complexity of obtaining nearly-optimal policies when the data is obtained by following a data collection policy.
These results are consistent with the above, since achieving a small concentration coefficient implicitly places assumptions on the MDP. 

The main message from these negative results is that a good data distribution alone is not sufficient.
We need to investigate realistic problem-dependent assumptions for MDPs. In the remainder of this work, we explore a restricted class of MDPs, for which we can obtain much stronger guarantees when learning offline, without stringent requirements on data collection.

\section{Action Impact Regularity}
\label{section:air}
Actions play an important role for the exponential lower bound constructions cited in the last section.
They use tree structures where different actions lead to different subtrees and hence different sequence of futures states and rewards. 
A class of MDPs that do not suffer from these lower bounds are those where actions do not have such strong impact on the future states and rewards.
In this section, we introduce the Action Impact Regularity (AIR) property, a property of the MDP which allows for more effective offline RL. The state is partitioned into an exogenous and endogenous component, and the property reflects that the agent's actions primarily impact the endogenous state with limited influence on the exogenous state. We first provide the formal definition and assumptions we leverage to design a practical offline RL algorithm and then discuss when these assumptions are likely to be satisfied.

\subsection{Formal Definition and Assumptions}
We use the standard state decomposition from Exogenous MDPs \cite{mcgregor2017factoring,dietterich2018discovering}. 
We assume the state space is $\cS = \Senv\times\Spri$ where $\Senv$ is the exogenous variable and $\Spri$ is the endogenous variable.
The transition dynamics are $\Penv: \Senv\times\cA\to\Delta(\Senv)$ and $\Ppri: \cS\times\cA\to\Delta(\Spri)$ for exogenous and endogenous variable respectively. The transition probability from a state $s_1=(\env{s_1},\private{s_1})$ to another state $s_2=(\env{s_2},\private{s_2})$ is $P(s_1,a,s_2)=\Penv(\env{s_1},a,\env{s_2})\Ppri(s_1,a,\private{s_2})$. 
\begin{definition}[The AIR Property]
    An MDP is $\varepsilon$-AIR if $\cS = \Senv\times\Spri$, and for any actions $a,a,'\in\cA$, the next exogenous variable distribution is similar if either action $a$ or $a'$ is taken. That is, for each state $s\in\cS$,
    \begin{align*}
        D_{TV}\left(\Penv(\env{s},a),\Penv(\env{s},a')\right)\leq\varepsilon
    \end{align*}
    where $D_{TV}$ is the total variation distance between two probability distributions on $\Senv$. For discrete spaces, the total variation distance is $D_{TV}(P, P') = \frac{1}{2} \norm{P-P'}_1$ ($\ell_1$ norm). 
    \label{def1}
\end{definition}

We define the AIR-MDP such that the property holds for all exogenous state-action pairs. If the property does not hold for one of the exogenous state-action pairs, then one can design an adversarial MDP that hides all difficulties in this single exogenous state-action pair and assuming the properties hold for all but one pair would be useless \cite{jiang2018pac}. 
 
Access to an (approximate) endogenous model is critical to exploit the AIR property, and is a fundamental component of our algorithm. To be precise, we make the following assumption in this paper. 
\begin{assumption}[AIR with an Approximate Endogenous Model]
    We assume that the MDP is $\varair$-AIR and that
    we have the reward model $r:\cS \times \cA \to [0, \rmax]$ and an approximate endogenous model $\HPpri: \cS\times\cA\to\Delta(\Spri)$ such that $D_{TV}(\Ppri(s,a),\HPpri(s,a))\leq\varp$ for any $(s,a)\in\cS\times\cA$.
    \label{ass1}
\end{assumption}

As mentioned in the introduction, it is common to assume that only the transition dynamics are approximated.
Moreover, similar to Definition \ref{def1}, we need the error on the approximate model to hold uniformly. 
Finally, the above assumption implicitly assumes that the separation between exogenous and endogenous state is given to us. More generally, the separation could be identified or learned by the agent, as has been done for contingency-aware RL agents \cite{bellemare2012investigating} and wireless networks \cite{dietterich2018discovering}. Because there are many settings where the separation is clear, we focus on this more clear case first where the separation is known.

\subsection{When Are These Assumptions Satisfied?}

Many real-world problems can be formulated as $\varepsilon$-AIR MDPs. Further, for many of these environments, the separation between exogenous and endogenous state is clear, and we either know or can reasonably approximate the endogenous model. In this section, we go through several concrete examples. 

We can first return to our stock trading example, from the introduction. The exogenous component is the market information (stock prices and volumes) and the endogenous component is the number of stock shares owned by the agent. 
The agent's actions influence their own number of shares, but as an individual trader, have limited impact on stock prices. Using a dataset of stock prices over time allows the agent to reason counterfactually about the impact of many possible trajectories of actions (buying/selling) on its shares (endogenous state) and profits (reward).

There are many settings where the agent has a limited impact on a part of the state. 
The optimal order execution problem is a task to sell $M$ shares of a stock within $H$ steps; the goal is to maximize the profit. The problem can be formulated as an MDP where the exogenous variable is the stock price and endogenous variable is the number of shares left to sell. It is common to assume infinite market liquidity \cite{nevmyvaka2006reinforcement} or that actions have a small impact on the stock price \cite{abernethy2013adaptive,bertsimas1998optimal}; this corresponds to assuming the AIR property.

Another example is the secretary problem \cite{freeman1983secretary}, which a family of problems that can often be used to model real-world application \cite{babaioff2007knapsack,goldstein2020learning}. The goal for the agent is to hire the best secretary out of $H$, interviewed in random order. After each interview, they have to decide if they will hire that applicant, or wait to see a potentially better applicant in the future.
The problem can be formulated as a $0$-AIR MDP where the endogenous variable is a binary variable indicating whether we have chosen to stop or not. 

Other examples include those where the agent only influences energy efficiency, such as in the hybrid vehicle problem \cite{shahamiri2008reinforcement,lian2020rule} and electric vehicle charging problem \cite{abdullah2021reinforcement}.
In the former problem, the agent controls the vehicle to use either the gas engine or the electrical motor at each time step, with the goal to minimize gas consumption; its actions do not impact the driver's behavior. In the latter problem, the agent controls the charging schedule of an electric vehicle to minimize costs; its actions do not impact electricity cost.

In some settings we can even restrict the action set or policy set to make the MDP $\varepsilon$-AIR. For example, if we know that selling $M$ shares hardly impacts the markets, we can restrict the action space to selling less than or equal to $M$ shares. In the hybrid vehicle example, if the driver can see which mode is used, we can restrict the policy set to only switch actions periodically to minimize distractions for the driver. 

In these problems with AIR, we often know the reward and transitions for the endogenous variables, or have a good approximation. For the optimal order execution problem, the reward is simply the selling price times the number of shares sold minus transaction costs, and the transition probability for $\Ppri$ is the inventory level minus the number of shares sold. In other applications, we may be able to use domain knowledge to build an accurate model for the endogenous dynamics. For the hybrid vehicle, we can use domain knowledge to calculate how much gas would be used for a given acceleration. Such information about the dynamics of the system can be simpler for engineers to specify, than (unknown) behavior of different drivers and environment conditions. Our theoretical results will include a term for the error in the endogenous model, but it is reasonable to assume that for many settings we can get that error to be relatively low, particularly in comparison to the error we might get if trying to model the exogenous state.

\subsection{Connections to the Literature on Exogenous MDPs}
AIR MDPs can be viewed as an extension of Exogenous MDPs. 
(1) We allow the action to have small impact on the environmental state, while the action has no impact on the exogenous state in Exogenous MDPs. (2) We do not assume the reward can be decomposed additively to an exogenous reward and an endogenous reward \cite{dietterich2018discovering} nor factor into a sum over each exogenous state variable \cite{chitnis2020learning}. For this previous definition of Exogenous MDPs, the focus was on identifying and removing the exogenous state/noise so that the learning problem could be solved more efficiently \cite{dietterich2018discovering,efroni2021provable}, thus the focus on reward decomposition. Our focus is offline learning where we want to exploit the known structure to enable counterfactual reasoning and avoid data coverage issues.

\section{Offline Policy Optimization for AIR MDPs}

In this section, we discuss several offline algorithms that exploit the AIR property for policy optimization. We then theoretically analyze an FQI-based algorithm, characterizing the performance of its outputted policy.

\subsection{Algorithms for AIR MDPs} \label{sec:algorithm}

Two standard classes of algorithms in offline RL are model-based algorithms---that learn a model from the offline dataset and then use dynamic programming---and model-free algorithms like fitted Q-iteration (FQI). These two approaches can be tailored to our setting with AIR MDPs, as we described below. There is, however, an even more basic approach in our offline RL setting using trajectory simulation, that has previously been used \cite{aiden2020}. We start by describing this simpler approach, and then the modified model-based and FQI approaches.

A natural approach is to reuse trajectories in the dataset to simulate alternative trajectories for an online RL algorithm. For each episode, a random trajectory is selected from the dataset. The online RL algorithm---such as an actor-critic method or a Q-learning agent---takes actions and deterministically transitions to the next exogenous state in the trajectory. The approximate endogenous and reward model are used to sample the next endogenous variable and reward. With such a trajectory simulator, we can run any online reinforcement learning algorithm to find a good policy for the simulator. 

This approach, however, does not exploit the fact that the agent is actually learning offline. The online RL algorithm cannot simply query the model for any state and action, and needs a good exploration strategy to find a reasonable policy. There are fewer theoretical guarantees for such online RL algorithms, and arguably more open questions about their properties than DP-based algorithms and fitted value iteration algorithms. 

A more explicit model-based approach is to learn the exogenous model from data, to obtain a complete transition and reward model, and use any planning approach.
The transition model for exogenous states can be constructed as if the action has no impact.
With the model, we can use any query-efficient planning algorithm to find a good policy for the model. 
Because actions have only small impact in the true MDP, we can learn an accurate 
exogenous model even if we do not have full data coverage. 

More precisely, recall the offline data is randomly generated by running $\pi_b$ on $M$, that is, $D=\{(S_0^{(i)},A_0^{(i)},\dots,S_{H-1}^{(i)},A_{H-1}^{(i)})\}_{i=1}^N$ sampled according to the probability measure $\bbP^{\pi_b}_M$. 
The pertinent part is the transitions between exogenous variables, so we define $D_{\tenv} =\{(S_0^{(i)},S_{1}^{(i),\tenv},\dots,S_{H-1}^{(i),\tenv})\}_{i=1}^N$.
The model-based approach constructs an empirical MDP $M_D=(\cS,\cA,\hat P,r,H,\hat \nu)$. For the tabular setting we have $\hat \nu(s) = \frac{1}{N} \sum_{i=1}^N \II(S_0^{(i)}=s)$, and
\begin{equation*}
    \HPenv(\senv_h,a,\senv_{h+1}) \!=\! \frac{\!\sum_{i=1}^N \II(S_h^{(i),\tenv}\!=\senv_h\!\!,S_{h+1}^{(i),\tenv}\!=\senv_{h+1})}{\sum_{i=1}^N \II(S_h^{(i),\tenv}\!=\senv_h)}
\end{equation*} 
for all $a \in \cA$. Exogenous variables not seen in the data are not reachable, and so can either be omitted from $\HPenv$ or set to self-loop. 
For large or continuous state spaces, we can learn $p(\senv_{h+1}|\senv_h)$ using any conditional distribution learning algorithm, and set $\HPenv(\senv_h,a,\senv_{h+1}) = p(\senv_{h+1}|\senv_h)$ for all $a \in \cA$.

For large or continuous states spaces, however, learning such a model and planning can be impractical. Learning an accurate exogenous model might be difficult if the exogenous transition is complex or the exogenous state is high-dimensional. Further, it is not possible to sweep through all states during planning. Smarter approximate dynamic programming algorithms need to be used, but even these can be quite computationally costly.

A reasonable alternative is FQI, which approximates value iteration without the need to learn a model.
Our FQI algorithm that exploits the AIR property is described in Algorithm \ref{fqi-air}, which we call FQI-AIR.
The algorithm simulates all actions from a state, and assumes it transitions to the exogenous state observed in the dataset. 
The reward and endogenous state for each simulated action can be obtained using the reward model and approximate endogenous model. 
Even though the true MDP is not necessarily $0$-AIR MDP, we will show in the analysis that as long as $\varair$ is small, the algorithm can return a nearly optimal policy in the true MDP. 
This algorithm, although simple, enjoys theoretical guarantees without making assumptions on the concentration coefficient, and can be much more computationally efficient than trajectory simulation methods.

\begin{algorithm}[t]
    \caption{FQI-AIR}
    \label{fqi-air}
    \begin{algorithmic}
        \STATE Input: dataset $D$, value function class $\calF$, $\HPpri$, $r$
        \STATE Let $q_H=0$, $D_{H-1} = \emptyset$, ..., $D_{1} = \emptyset$
        \FOR {$h=H-1,\dots,0$}
            \STATE For all $i\in\{1,\dots,N\}$, all $\private{s_h} \in \Spri$, all $a \in \cA$
            \STATE Sample $s^{'\tpri} \sim \HPpri(s_{h}^{(i),\tenv},\private{s_h},a)$, compute target
            \begin{align*}
                t =  r(s_{h}^{(i),\tenv},\private{s_h},a) + \max_{a'\in\cA} q_{h+1}(s_{h+1}^{(i),\tenv},s^{'\tpri},a')
            \end{align*}
            \STATE Add (synthetic) pair $((s_{h}^{(i),\tenv},s^\tpri,a),t)$ to $D_h$
            \STATE After generating $D_h$
            \begin{align*}
                & q_h = \arg\min_{f\in\calF} \sum_{(x,y) \in D_h}(f(x) - y)^2 \\ 
                & \pi_h(s) = \arg\max_{a} q_h(s,a) \text{ for all } s\in\cS_h
            \end{align*}
        \ENDFOR
        \STATE Output: $\pi=(\pi_0,\dots,\pi_{H-1})$
    \end{algorithmic}
\end{algorithm}

Note that the computational cost scales with the size of $\Spri$ and $\cA$. When $|\Spri|$ or $|\cA|$ is large, we can modify FQI-AIR to no longer use full sweeps. Instead, we can randomly sample from the endogenous state space and action.
We include a practical implementation of FQI-AIR in Algorithm \ref{fqi-air-practical}. 
For each exogenous state in the dataset, we sample an endogenous state and an action, and query the approximate model to obtain a target for FQI update.
As a result, the computation can be independent of the size of $\Spri$ and $\cA$.
However, for sample complexity, the performance loss of the algorithm would depend on the squared root of the size $|\Spri||A|$, as shown in the next section.

\begin{algorithm}[t]
    \caption{FQI-AIR for large state spaces}
    \label{fqi-air-practical}
    \begin{algorithmic}
        \STATE Input: dataset $D$, an approximate model $\HPpri$, $r$, mini-batch size $B$, number of training iteration $K$, number of updates per iteration $M$
        \STATE Initialize a Q function $q_\theta:\Senv\times\Spri\times\cA\times[H]\to\RR$, parameterized by $\theta$
        \STATE $\bar\theta\gets\theta$ 
        \FOR {$k=1,\dots,K$}
            \FOR{$m=1,\dots,M$}
            \STATE Sample a mini-batch of transitions $\{(\senv_j,\spri_j,a_j,h_j,s_j^{'\tenv}, s_j^{'\tpri})\}_{j=1}^B$ from $D$
            \STATE For all $j$, sample an endogenous state $\tilde s^\tpri_j \in\Spri$ and an action $\tilde a_j\in\cA$ randomly,  sample $\tilde{s}^{'\tpri}_j \sim \HPpri(s_{j}^{\tenv},\tilde s^\tpri_j,\tilde a_j)$, and compute target
            \begin{align*}
                t_j =  r(\senv_j,\tilde s^\tpri_j,\tilde a_j) + \max_{a'\in\cA} q_{\bar\theta}(s_j^{'\tenv},\tilde s^{'\tpri}_j,a',h_j+1)
            \end{align*}
            \STATE Compute the mini-batch loss $L(\theta) = \sum_{j=1}^B (q_\theta(\senv_j,\tilde s^\tpri_j,\tilde a_j,h_j) - t_j)^2$
            \STATE Update $\theta$ to reduce $L(\theta)$ 
            \ENDFOR
            \STATE $\bar\theta\gets\theta$ 
        \ENDFOR
        \STATE Output: the greedy policy with respect to $q_\theta$
    \end{algorithmic}
\end{algorithm}

\subsection{Theoretical Analysis of FQI-AIR}\label{sec_main_theory}
First we need the following definitions. For a given MDP $M$, we define $J(\pi, M) \defeq \EE^\pi_M[R(\tau)]$ where $\tau=(S_0,A_0,\dots,S_{H-1},A_{H-1})$ is a random element in $(\cS\times\cA)^H$, the expectation $\EE^\pi_M$ is with respect to $\bbP^\pi_M$, and $R(\tau)=\sum_{h=0}^{H-1} r(S_h,A_h)$.

We also need the following assumption on the function approximation error. This is a common assumption to analyze approximate value iteration algorithms \cite{antos2006learning,munos2007performance}. 
Let $\tilde \nu_h(\env{s_h})=\bbP^{\pi_b}_M(\env{S_h}=\env{s_h})$ be the data distribution on $\Senv$ at horizon $h$. Given a probability measure $\nu_h$ on $\Senv$ and $p\in [1,\infty)$, define the norm
\begin{align*}
    &\norm{q}_{p, \nu_h}^p = \sum_{\env{s} \in \env{S}} \sum_{\private{s}\in\private{S}} \sum_{a\in\cA} \frac{\nu_h(\senv)}{|\Spri||\cA|} |q(\senv,\spri,a)|^p.
\end{align*}

\begin{assumption}
    Assume the function class $\cF$ is finite and the inherent Bellman error is bounded by $\varapx$, that is,
    \begin{align*}
        \varapx = \max_{h\in[H]} \max_{f'\in\cF} \min_{f\in\cF}  \norm{\bellman f' - f}_{2,\tilde \nu_h}^2.
    \end{align*}
    \label{ass2}
\end{assumption}
We assume the function class is finite for simplicity, which is common in many offline RL papers \cite{chen2019information}. If the function class is not finite but has a bounded complexity measure, we can derive similar results by replacing the size of the function class with the complexity measure. For example, \citeA{duan2021risk} analyze FQI with the Rademacher complexity. Since studying the complexity measure is not a critical point for our paper, we decide to make the finite function class assumption.

\begin{restatable}{theorem}{theoremfqi}
    \label{theoremfqi}
    Under Assumption \ref{ass1} and \ref{ass2}, let $\pi^*_M$ be an optimal policy in $M$ and $\pi$ the output policy of FQI-AIR, then with probability at least $1-\zeta$ 
    \begin{align*}
        &J(\pi, M) \geq J(\pi^*_M, M) - 2 \vmax H (\varair+\varp) - \\
        &(H+1)H\sqrt{ |\Spri||\cA|} \left(\sqrt{\frac{72\vmax^2\ln{(H|\cF||\Spri||\cA|/\zeta)}}{N} + 2\varapx}\right).
    \end{align*}
\end{restatable}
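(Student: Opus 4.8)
The plan is to decompose the suboptimality $J(\pi^*_M, M) - J(\pi, M)$ across two ``bridges'': first relate the true MDP $M$ to the zero-AIR surrogate MDP $\bar M$ that FQI-AIR effectively plans in (the one whose exogenous transitions are replaced by the deterministic data trajectories and whose endogenous model is $\HPpri$), and second bound the error incurred by FQI within $\bar M$. Concretely, write
\begin{align*}
J(\pi^*_M, M) - J(\pi, M) &= \underbrace{\big(J(\pi^*_M, M) - J(\pi^*_M, \bar M)\big)}_{\text{(I)}} + \underbrace{\big(J(\pi^*_M, \bar M) - J(\pi, \bar M)\big)}_{\text{(II)}} + \underbrace{\big(J(\pi, \bar M) - J(\pi, M)\big)}_{\text{(III)}}.
\end{align*}
Terms (I) and (III) are simulation-lemma-style bounds: for any fixed policy, the difference in return between $M$ and $\bar M$ is controlled by the per-step total-variation distance between their transition kernels, which splits into the exogenous mismatch (bounded by $\varair$ via Definition \ref{def1}, since the data trajectory is a valid next-exogenous-state draw and any two action-conditioned exogenous kernels are within $\varair$) and the endogenous mismatch (bounded by $\varp$ via Assumption \ref{ass1}). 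Summing the telescoped one-step errors over the $H$ horizon steps and multiplying by the value range $\vmax$ gives each of (I) and (III) a bound of roughly $\vmax H(\varair + \varp)$, accounting for the $2\vmax H(\varair+\varp)$ term in the theorem. Care is needed here because $\bar M$ has an exogenous transition that is an empirical mixture over observed trajectories; the cleanest route is to note that FQI-AIR's Bellman targets are unbiased (over the data randomness) estimates of the Bellman operator of the MDP whose exogenous kernel is $\tilde\nu_h$-marginal-consistent and action-independent, so (I) and (III) should really be phrased as ``$M$ vs.\ the action-independent-exogenous MDP'' and the empirical-sampling error folded into term (II).

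For term (II) -- the FQI error inside the surrogate MDP -- the plan is to invoke the standard approximate value iteration analysis (as in \citeA{munos2007performance, antos2006learning}), adapted to the finite-horizon, known-endogenous-model setting. The key objects are the per-iteration regression errors $\norm{q_h - \bellman q_{h+1}}_{2,\tilde\nu_h}^2$ in the surrogate MDP. Each such error is bounded by the inherent Bellman error $\varapx$ (Assumption \ref{ass2}) plus a statistical estimation error from fitting over the finite class $\cF$ with $N$ trajectories' worth of exogenous samples; a union bound over the $H$ regressions, the $|\cF|$ candidate functions, and the $|\Spri||\cA|$ synthetic state-action queries per exogenous sample yields the $\ln(H|\cF||\Spri||\cA|/\zeta)$ factor and the $1/N$ rate, with the $72\vmax^2$ constant coming from a Bernstein/Hoeffding bound on bounded targets. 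The crucial structural point -- and the reason the $\sqrt{|\Spri||\cA|}$ appears in place of a concentration coefficient -- is that FQI-AIR queries \emph{every} endogenous state and action for each observed exogenous state, so the effective data distribution over $(\senv,\spri,a)$ is $\tilde\nu_h(\senv)/(|\Spri||\cA|)$, exactly the norm in Assumption \ref{ass2}; converting the weighted $\ell_2$ error back to the $\ell_\infty$-style bound needed for policy suboptimality costs a factor $\sqrt{|\Spri||\cA|}$. Propagating the per-step errors through the $H$ Bellman backups, and then converting the final value-function error to a performance-difference bound on the greedy policy $\pi$, introduces the extra factor of $H$ (and the $(H+1)$ from the $q_H = 0$ initialization plus $H$ backups), giving the $(H+1)H\sqrt{|\Spri||\cA|}(\sqrt{\cdots/N + 2\varapx})$ term.

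Assembling: (I) $+$ (III) contribute $2\vmax H(\varair+\varp)$, and (II) contributes the function-approximation-plus-statistical term, matching the theorem statement. The main obstacle I anticipate is handling term (II) rigorously: the surrogate MDP that FQI-AIR implicitly targets has a \emph{random, data-dependent} exogenous transition, so the regression targets are not i.i.d.\ draws from a fixed MDP's Bellman operator, and one must be careful to define the ``true'' Bellman operator against which $\varapx$ is measured (the one using the population marginal $\tilde\nu_h$ over exogenous states, with action-independent exogenous dynamics) and then control the deviation of the empirical backup from it uniformly over $\cF$. A secondary subtlety is that the max over actions inside the target makes the target a nonlinear function of $q_{h+1}$, so the error propagation needs the usual $\norm{\max_a q - \max_a q'}\le\max_a\norm{q-q'}$ trick and a careful accounting of how errors compound multiplicatively in $H$; getting the constants to match $72$ and the horizon powers to match $(H+1)H$ exactly will require the bookkeeping done in the full proof, but the structure above is what drives the result.
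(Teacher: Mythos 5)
Your proposal follows essentially the same route as the paper's proof: the paper likewise introduces a $0$-AIR baseline MDP $M_b$ with the action-independent population exogenous kernel $\TPenv(\env{s_h},\cdot,\env{s_{h+1}})=\bbP^{\pi_b}_M(\env{S_{h+1}}=\env{s_{h+1}}\mid\env{S_h}=\env{s_h})$ and endogenous kernel $\HPpri$, bounds $|J(\cdot,M)-J(\cdot,M_b)|\le \vmax H(\varair+\varp)$ for each of $\pi^*_M$ and $\pi$ via a finite-horizon simulation lemma, and analyzes FQI inside $M_b$ with a Bernstein bound, a union bound over $H$, $|\cF|$ and $\Spri\times\cA$, the density-ratio factor $\sqrt{|\Spri||\cA|}$ arising from the uniform sweep, and a performance-difference/error-propagation lemma yielding the $(H+1)H$ factor. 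The subtlety you flag about measuring $\varapx$ against the population, action-independent surrogate rather than the empirical one is exactly how the paper resolves it, so the proposal is correct and matches the paper's argument.
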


The bound on performance loss has three components: (1) a sampling error term which decreases with more trajectories; (2) the AIR parameter; and (3) an approximation error term which depends on the function approximation used. The result implies that as long as we have a sufficient number of episodes, a good function approximation, and small $\varair$, then the algorithm can find a nearly-optimal policy with high probability. For example, if $\varair,\varp$ and $\varapx$ are small enough, we only need $N=\tilde O(H^4\vmax^2|\Spri||\cA|/\delta)$ trajectories, which is polynomial in $H$, to obtain a $\delta$-optimal policy.

The proof can be found in the appendix. The key idea is to introduce a baseline MDP $M_b$ that is $0$-AIR, that approximates $M$ which is actually $\varair$-AIR. The baseline MDP $M_b=(\cS,\cA,\Tilde P,r,H,\nu)$ has $\TPenv(\env{s_h},a,\senv_{h+1}) = \bbP^{\pi_b}_M(\env{S_h}=\env{s_h},\env{S_{h+1}}=\env{s_{h+1}})/\bbP^{\pi_b}_M(\env{S_h}=\env{s_h})$ and $\TPpri(s_h,a,\private{s_h})=\HPpri(s_h,a,\private{s_h})$.
The transition probability for exogenous state does not depend on the action $a$ taken, so $M_b$ is $0$-AIR. We show that FQI returns a good policy in $M_b$, and that good policies in $M_b$ are also good in the true MDP $M$.

We can contrast this bound to others in offline RL. 
For FQI results that assume the concentration coefficient is bounded and small, the error bound has a term that scales with $\sqrt{C}$, which is on the same order as the term $\sqrt{|\Spri||\cA|}$ in our bound. 
We can get a similar bound by considering this restricted class of MDPs that are $\varepsilon$-AIR, without having to make any assumptions on the concentration coefficient. 
For settings where this assumption is appropriate---namely when the MDP is $\varepsilon$-AIR---this is a significant success, as we need not make these stringent conditions on data distributions.   %

\section{Policy Evaluation for AIR MDPs}
We can also exploit the AIR property, and access to the approximate endogenous model and reward model, to evaluate the value of a given policy. Given a trajectories of exogenous states  $(S_0^{(i)},S_1^{(i),\tenv},\dots,S_{H-1}^{(i),\tenv})$, we can rollout a synthetic trajectory under $\pi$ and $\HPpri$: $\tau_{D}^{(i)}=(S_0^{(i)},A_0^{(i)},S_1^{(i)},A_1^{(i)},\dots)$ where $A_t^{(i)}\sim\pi(S_t^{(i)})$, $S_{t+1}^{(i),\tpri}\sim\HPpri(S_t^{(i)},A_t^{(i)})$ and $S_{t+1}^{(i)} = [S_{t+1}^{(i),\tenv},S_{t+1}^{(i),\tpri}]$. For $R(\tau_D^{(i)})\defeq\sum_{t=0}^{H-1} r(S_t^{(i)}, A_t^{(i)})$, the average return over the $N$ synthetic trajectories $\hat J(\pi,M)=\frac{1}{N}\sum_{i=1}^N R(\tau_D^{(i)})$ is an estimator of $J(\pi,M)$. This method is simple, but very useful because we can do hyperparameter selection with only the offline dataset without introducing extra hyperparameters.

We can bound the policy evaluation error by Hoeffding's inequality. More sophisticated bounds for policy evaluation can be found in \cite{thomas2015high}.
\begin{restatable}{theorem}{theoremope}
    \label{theoremope}
    Under Assumption \ref{ass1}, given a deterministic policy $\pi$, we have that with probability at least $1-\zeta$
    \begin{equation*}
        \!\left|\hat J(\pi,M) \!-\! J(\pi, M)\right| \!\leq\!\! 
        \vmax \!\left(H \varair \!+\! H \varp \!+\! \sqrt{\tfrac{\ln{(2/\zeta)}}{2N}}\right).
    \end{equation*}
\end{restatable}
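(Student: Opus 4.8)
The plan is to split the error into a statistical part and a bias part, $|\hat J(\pi,M) - J(\pi,M)| \le |\hat J(\pi,M) - \EE[\hat J(\pi,M)]| + |\EE[\hat J(\pi,M)] - J(\pi,M)|$, and bound each separately. For the statistical part I would observe that the returns $R(\tau_D^{(1)}),\dots,R(\tau_D^{(N)})$ are i.i.d.\ and take values in $[0,\vmax]$: the observed exogenous trajectories are i.i.d.\ draws from $\bbP^{\pi_b}_M$ (with initial states i.i.d.\ from $\nu$), and, conditioned on the $i$-th of these, the synthetic rollout under $\pi$, $\HPpri$ and $r$ uses fresh independent randomness, so $R(\tau_D^{(i)})$ is a function of i.i.d.\ inputs. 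Hoeffding's inequality then gives $|\hat J(\pi,M) - \EE[\hat J(\pi,M)]| \le \vmax\sqrt{\ln(2/\zeta)/(2N)}$ with probability at least $1-\zeta$, which is the last term of the bound. (Note that, unlike Theorem~\ref{theoremfqi}, no function-approximation assumption enters, since the estimator is a pure Monte Carlo average.)

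The real work is the bias term, which I would handle by a two-step comparison. First identify $\EE[\hat J(\pi,M)]$ exactly: it is the expected return of $\pi$ in the process that draws the initial state from $\nu$ and subsequent exogenous states from their (history-dependent) marginal conditionals under $\bbP^{\pi_b}_M$, while generating endogenous states and rewards with $\pi$, $\HPpri$ and $r$. Introduce the intermediate process that is identical except that it uses the true endogenous kernel $\Ppri$ in place of $\HPpri$. Since the exogenous part of the trajectory is held fixed across these two processes, a standard performance-difference / simulation-lemma telescoping over the $H$ steps applies, where at step $h$ only the endogenous kernel changes; using $D_{TV}(\Ppri(s,a),\HPpri(s,a)) \le \varp$ from Assumption~\ref{ass1} together with the product form $P(s,a,\cdot) = \Penv(\senv,a,\cdot)\,\Ppri(s,a,\cdot)$ (so the one-step TV distance of the full kernels is also at most $\varp$), and bounding each continuation value by $\vmax$, this gap is at most $\vmax H \varp$.

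The remaining gap, between the intermediate process (replayed exogenous trajectory, true $\Ppri$) and $J(\pi,M)$ (exogenous trajectory generated by $\pi$ acting in $M$, true $\Ppri$), is exactly where the $\varair$-AIR property must be used, and I expect it to be the main obstacle. I would again telescope over steps, comparing at step $h$, with current exogenous state $\senv_h$, the replay kernel $\bbP^{\pi_b}_M(S^\tenv_{h+1}=\cdot \mid S^\tenv_{0:h})$ against $\Penv(\senv_h,A_h,\cdot)$. The key observation is that, because in $M$ the next exogenous state depends on the past only through the current exogenous state and action, the replay kernel is a mixture $\sum_a w(a)\,\Penv(\senv_h,a,\cdot)$ over the actions $\pi_b$ could have taken at that point (and the corresponding endogenous states); by $\varair$-AIR each $\Penv(\senv_h,a,\cdot)$ lies within $\varair$ in total variation of $\Penv(\senv_h,A_h,\cdot)$, so by convexity of TV the mixture does too, and the per-step kernel difference is at most $\varair$. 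Summing gives $\vmax H \varair$, and combining with the previous step yields $|\EE[\hat J(\pi,M)] - J(\pi,M)| \le \vmax H(\varair + \varp)$; adding the Hoeffding term completes the proof. This comparison largely reuses the machinery developed to relate $M$ to the $0$-AIR baseline $M_b$ in the proof of Theorem~\ref{theoremfqi}; the one subtlety specific to policy evaluation is that the replayed exogenous process is history-dependent rather than Markov (since $\pi_b$ can depend on the endogenous state), so the telescoping must be carried out with history-indexed kernels — but the AIR-based one-step TV bound is insensitive to this.
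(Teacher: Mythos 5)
Your proposal is correct and follows essentially the same route as the paper: identify $\EE[\hat J(\pi,M)]$ with the return of $\pi$ in the $0$-AIR baseline $M_b$ (whose exogenous kernel is the behavior-induced replay kernel, a mixture $\sum_{a'}\pi'(a'|\senv)\Penv(\senv,a',\cdot)$ controlled by AIR via convexity of total variation), apply Hoeffding to the i.i.d.\ synthetic returns in $[0,\vmax]$, and bound $|J(\pi,M_b)-J(\pi,M)|$ by $\vmax H(\varair+\varp)$ with a simulation-lemma telescoping. The only cosmetic difference is that you perturb the endogenous and exogenous kernels in two separate triangle-inequality steps, whereas the paper bounds the combined one-step $\ell_1$ kernel difference by $2(\varair+\varp)$ and invokes the simulation lemma once; your explicit remark that the replayed exogenous process is history-dependent is a fair (and slightly more careful) observation about the same construction.
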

The results suggests that if we have a sufficient number of trajectories and small $\varair$ and $\varp$, then $\hat J(\pi,M)$ is a good estimator for $J(\pi, M)$. Even though the estimator is biased and not consistent, we find it provides sufficient information for hyperparameter selection in our experiments.

Theorem \ref{theoremope} only holds for a given policy that is independent of the offline data. If we want to evaluate the output policy, which depends on the data, we need to apply an union bound for all deterministic policies. In that case, the sampling error term becomes $\tilde O(\sqrt{|\cS|/N})$. To avoid the dependence on the state space, an alternative is to split the data into two subsets: one subset is used to obtain an output policy and another subset is used to evaluate the output policy. 

\section{Simulation Experiments}
\label{section:exp}

We evaluate the performance of the FQI-based algorithm in two simulated environments with the AIR property: an optimal order execution problem and an inventory management problem.  
We also tested the other algorithms described in Section \ref{sec:algorithm}, for completeness and to contrast to FQI-AIR. We include these results in Appendix \ref{appendix:other_air}. FQI-AIR is notably better than these other approaches, and so we focus on it as the representative algorithm that exploits the AIR property.

The first goal of these experiments is to demonstrate that existing offline RL algorithms fail to learn a good policy for some natural data collection policies, while our proposed algorithm returns a near-optimal policy. 
To demonstrate this, we test three data collection policies: (1) a random policy which is designed to give a small concentration coefficient, (2) a learned near-optimal policy obtained using DQN with online interaction, which covers an optimal policy reasonably well, and (3) a constant policy which, in theory, has an infinite concentration coefficient due to missing state-action pairs and does not cover an optimal policy. The second goal is to validate the policy evaluation analysis with a varying number of trajectories $N$ and $\varair$. 

\subsection{Environments}

We investigated the behavior of the algorithms on two simulated environments that mimic two real-world problems that satisfy the AIR property: optimal order execution and inventory management. In the \textbf{optimal order execution problem}, the task is to sell $M=10$ shares within $H=100$ steps. The stock prices $X_1,\dots,X_h$ are generated by an ARMA($2,2$) process and scaled to the interval $[0,1]$. Specifically, the ARMA($2,2$) process is
\begin{align*}
    X_t &= \Tilde X_t/20 + 1/2 \quad \quad \quad\text{where } \quad
    \Tilde X_t = c + \varepsilon_t + \sum_{i=1}^2 \varphi_i \Tilde X_{t-i} + \sum_{i=1}^2 \theta_i \varepsilon_{t-i},
\end{align*}
$\varphi_1\sim\mathcal{U}(-0.9,0.0)$, $\varphi_2\sim\mathcal{U}(0.0,0.9)$ and $\theta_i\sim\mathcal{U}(-0.5,0.5)$ for $i=1,2$. The scaling parameters are chosen so that the process is stable and the price is in the interval $[0,1]$. 
The endogenous variable $P_h$ is the number of shares left to sell. To construct state, we use the most recent $K=3$ prices with the most recent endogenous variable, that is, $S_h=(X_{h-2}, X_{h-1}, X_{h}, P_h)$. The action space is $\cA=[5]$. The reward function is the stock price $X_h$ multiplied by the number of selling shares $\min\{A_h, P_h\}$. 

We consider both a setting with $\varair = 0$ and $\varair > 0$, as well as different data generating policies. When the number of selling shares is greater than $0$, the stock price drops by $10\%$ with probability $\varair$. For $\varair = 0$, this means selling shares has no impact on the stock price. When $\varair > 0$, it does, allowing us to test how robust FQI-AIR is to some violation of the AIR property.   
The random policy used in the environment chooses $0$ with probability 75\% and choose $1,\dots,5$ with probability 5\%. The constant policy always chooses action $0$. 

We design an \textbf{inventory management problem} based on existing literature \cite{kunnumkal2008using,van1997neuro}. The task is to control the inventory of a product over $H=100$ stages. At each stage, we observe the inventory level $X_t$ (endogenous) and the previous demand $D_{t-1}$ (exogenous) and choose to place a order $A_t\in[10]$. The inventory level evolves as: $X_{t+1} = (X_t + A_t - D_t)^+$. The reward function is $c A_t + h (X_t + A_t - D_t)^{+} - b (-X_t - A_t + D_t)^{+}$ where $c$ is the order cost, $h$ is the holding cost, and $b$ is the cost for lost sale. 
We use $c=0.1$, $h=0.25$ and $b=1.0$ in the experiment. To make sure the reward is bounded, we clip the reward at a large negative number $-100$. 
The endogenous variable is the inventory level, which can be as large as $1000$, so we restrict FQI-AIR to sweep only for a subset of the endogenous space, that is, $[15]\subset\Spri$.

As before, we consider both $\varair = 0$ and $\varair > 0$, which in this case impacts the demand. The demand $D_t=(\Tilde D_t)^+$ where $\Tilde D_t$ follows a normal distribution with mean $\mu$ and $\sigma=\mu/3$ and $(d)^{+}\defeq\max\{d,0\}$. In the beginning of each episode, $\mu$ is sampled from a uniform distribution in the interval $[3,9]$. 
When the order is greater than $0$, the mean of the demand distribution decreases or increases by $10\%$ with probability $\varair/2$ respectively. 
  
Again, we consider three different data generating policies. The random policy used in the environment is a policy which chooses a value $\Tilde A_t \in [D_{t-1}-3,D_{t-1}+3]$ uniformly and then choose the action $A_t=\max\{\min\{\Tilde A_t,10\},0\}$. The constant policy always chooses action $A_t = \min\{D_{t-1},10\}$. The near-optimal policy is obtained using DQN with online interaction, for both environments.

There is an important nuance for the inventory problem. In this problem, the endogenous transition and reward depends on the next exogenous variable. Fortunately, we can generalize the definition of exogenous MDPs such that the endogenous transition is $\Ppri: \cS\times\cA\times\Senv\to\Delta(\Spri)$ and the reward is $r:\cS\times\cA\times\Senv\to[0,\rmax]$. We assume we have an approximate endogenous model $\HPpri: \cS\times\cA\times\Senv\to\Delta(\Spri)$ such that $D_{TV}(\Ppri(s,a,\senv),\HPpri(s,a,\senv))\leq\varp$ for any $(s,a,\senv)\in\cS\times\cA\times\Senv$. With these changes, the algorithms and the theoretical analysis naturally extend to the new definition of exogenous MDPs.

\subsection{Algorithm Details}
We compare FQI-AIR to FQI, MBS-QI \cite{liu2020provably}, CQL \cite{kumar2020conservative}, IQL \shortcite{kostrikov2021offline}.
As we discussed in the previous sections, FQI is expected to work well when the concentration coefficient is small. MBS-QI is expected to perform well when the data covers an optimal policy. CQL and IQL are strong baselines which have been shown to be effective empirically for discrete-action environments such as Atari games. 

We had several choices to make for the baseline algorithms. 
MBS-QI requires density estimation for the data distribution $\mu$. For the optimal order execution problem, we use state discretization and empirical counts to estimate the data distribution as used in the original paper. For the inventory problem, the state space is already discrete so there is no need for discretization. We show the results with the best threshold $b$ from the set $\{0.002, 0.001, 0.0001, 0.00005\}$. 
Note that it is possible that there is no data for some states (or state discretization) visited by the output policy, and for these states, all action values are all zero. To break ties, we allow MBS-QI to choose an action uniformly at random.
For CQL, we add the CQL($\mathcal{H}$) loss with a weighting $\alpha$ when updating the action values. We show the results with the best $\alpha$ from the set $\{0.1,0.5, 1.0, 5.0\}$ as suggested in the original paper. 
For IQL, we show the results with the best $\tau$ from the set $\{0.7,0.8,0.9\}$ and $\beta$ from the set $\{10.0, 3.0, 1.0\}$.

We use the same value function approximation for all algorithms in our experiments: two-layers neural networks with hidden size 128. The neural networks are optimized by Adam \cite{kingma2014adam} or RMSprop with learning rate in the set $\{0.001, 0.0003, 0.0001\}$. All algorithms are trained for 100 iterations. We also tried training the comparator algorithms for longer, but it did not improve their performance. 

The hyperparameters for FQI-AIR are selected based on $\hat J(\pi,M)$, which only depends on the dataset. The hyperparameters for comparator algorithms are selected based on $J(\pi,M)$---which should be a large advantage---estimated by running the policy $\pi$ on the true environment $M$ with $100$ rollouts.

\subsection{Policy Performance When $\varair = 0$}

\begin{figure}[t]
    \centering
    \subfigure[Optimal order execution problem]
    {
        \centering
        \includegraphics[width=0.8\textwidth]{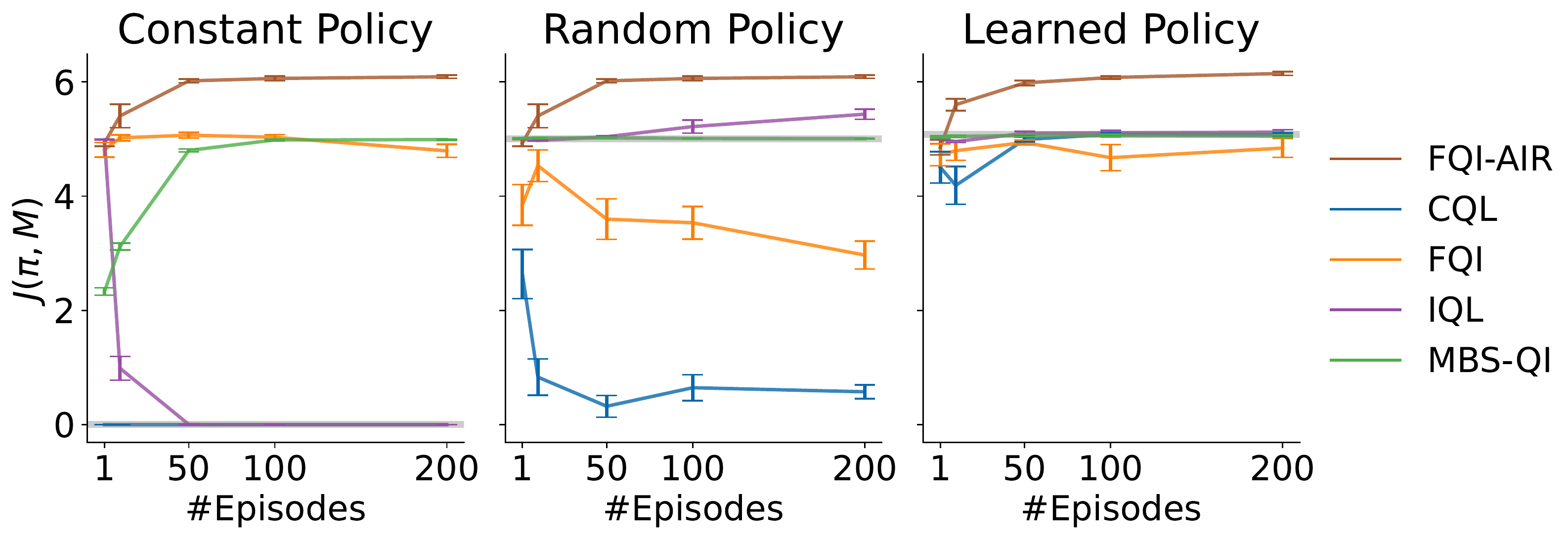}
    }
    \subfigure[Inventory management problem]
    {
        \centering
        \includegraphics[width=0.8\textwidth]{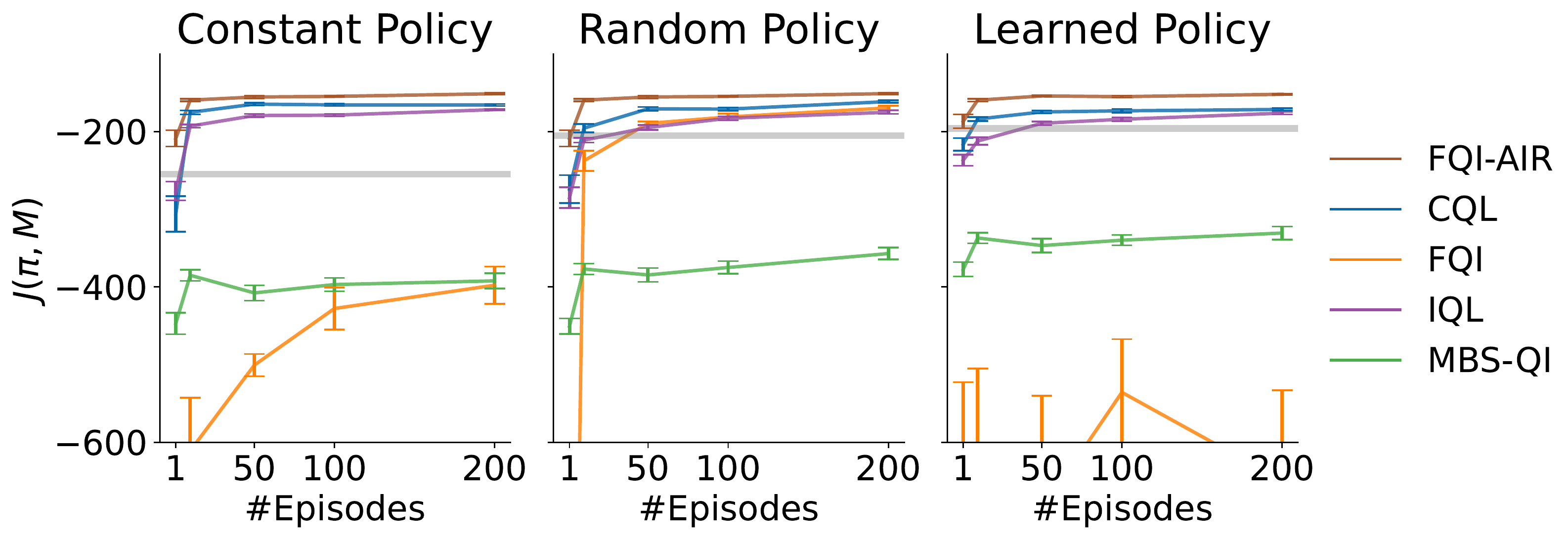}
    }
    \caption{Comparison between algorithms in the optimal order execution problem and inventory management problem, for $\varair= 0$. The gray lines show the performance of the data collection policies. Results are averaged over 30 runs with error bars for one standard error.}
    \label{sim_result}
\end{figure}

Figure \ref{sim_result} shows the performance of our algorithm and the comparator algorithms with a different number of trajectories $N=\{1,5,10,25,50,100,200\}$ and $\varair=0$. Our algorithm outperforms other algorithms for all data collection policies. This result is not too surprising, as FQI-AIR is the only algorithm to exploit this important regularity in the environment; but nonetheless it shows how useful it is to exploit this AIR property when it is possible.

We can first look more closely at the optimal order execution results. MBS performs slightly better than FQI, however, we found it performs better because the tie-breaking is done with a uniform random policy especially under the constant policy dataset.\footnote{A uniform policy in this environment can achieve a performance $J(\pi,M)\approx 5$.}
CQL and IQL fails when the data collection policy is far from optimal (constant policy) and perform reasonably with a learned policy. FQI-AIR exploits the AIR property, and so is robust to different data collection policies.
The results show that exploiting the AIR property is critical for the robust performance.

We see similar patterns for the inventory management problem. FQI-AIR outperforms the other algorithms for all data collection policies. CQL and IQL perform well in this environment. MBS outperforms FQI under the learned policy, but FQI outperforms MBS under the random policy. The results match the expectation that FQI performs well with an uniform data and MBS-QI performs well with an expert data.

\subsection{Policy Performance with a Large Value of $\varair$}
Now we consider the impact of using these algorithms when $\varair > 0$. We should expect FQI-AIR to be most impacted, as the other algorithms do not exploit the AIR property. We vary $\varair$ from $0.1$ to $0.8$ and find that the results are similar to those with small $\varair$. FQI-AIR still significantly outperforms other offline methods.

\begin{figure}[t]
    \centering
    \subfigure[Optimal order execution problem]
    {
        \centering
        \includegraphics[width=0.8\textwidth]{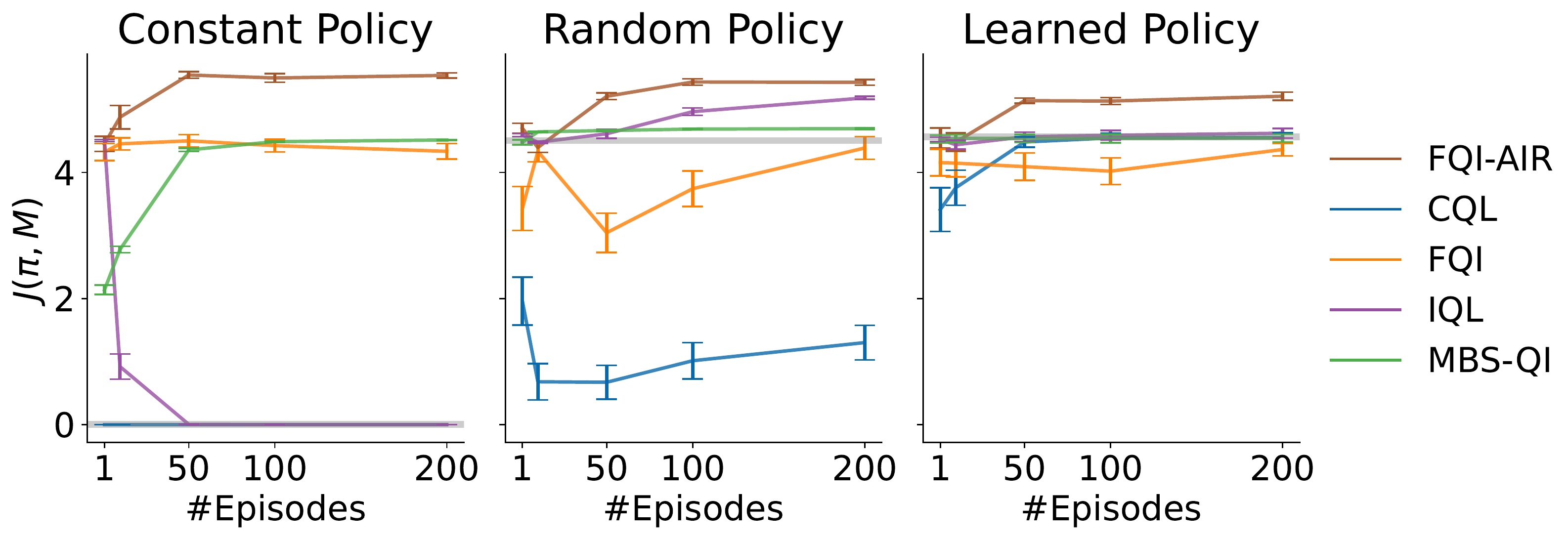}
    }
    \subfigure[Inventory management problem]
    {
        \centering
        \includegraphics[width=0.8\textwidth]{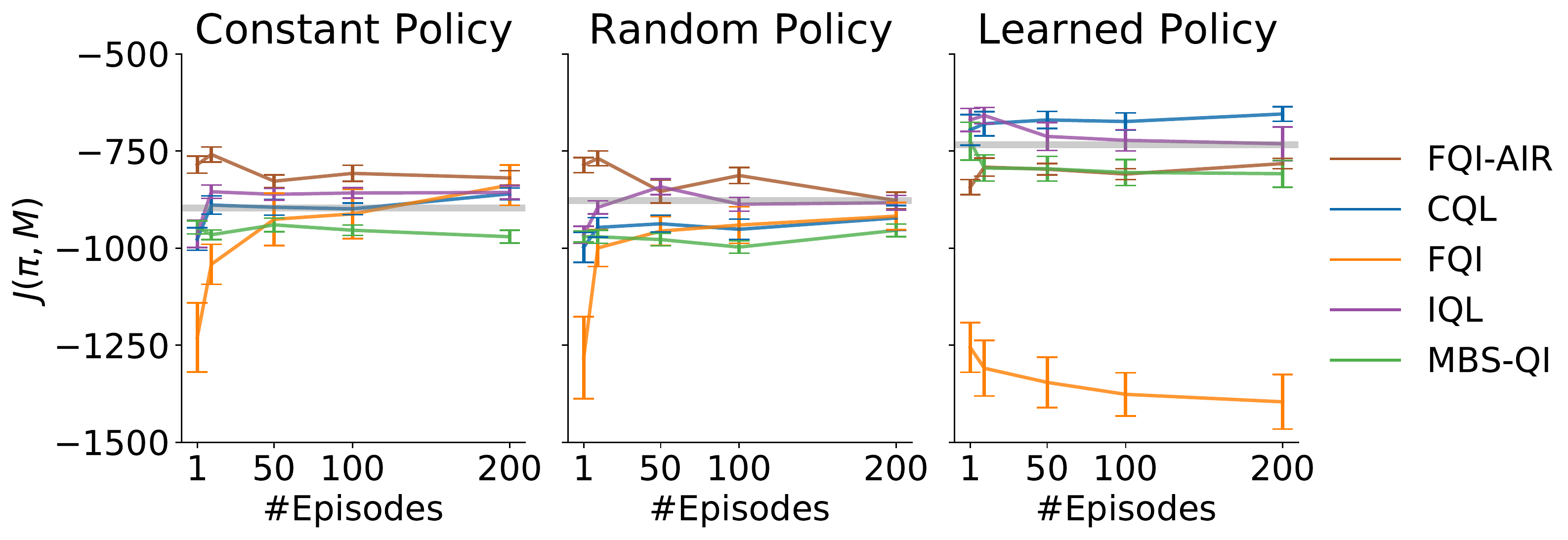}
    }
    \caption{Comparison between algorithms in two simulation problems with $\varair=0.8$. The gray lines show the performance of the data collection policies. The results are averaged over 30 runs with error bars representing one standard error.}
    \label{exp_air_0.1}
\end{figure}

Figure \ref{exp_air_0.1} shows the result with $\varair = 0.8$ where the performance of all algorithms drop significantly. In theory, FQI-AIR can have a large performance loss with large $\varair$, however, FQI-AIR still consistently outperforms other baselines in our experiments, except for the inventory management problem with the learned policy. This is because the divergence between the true exogenous transition and the synthetic exogenous transition in FQI-AIR does not occur at every time step even when $\varair$ is large. For example, in the optimal order execution problem, the divergence can only happen when we sell a positive number of shares. 
The theoretical result is the worst-case analysis where the divergence can occur at every time step and we suffer $\rmax$ loss every time the divergence occurs. Therefore, the experiment results suggest that these practical problems considered in the paper are not the worst case and FQI-AIR can perform well even with large $\varair$.

\subsection{Results for Policy Evaluation}\label{sec_validate}
To validate the policy evaluation analysis, we investigate the difference $|\hat J(\pi,M) - J(\pi, M)|$ with $\varair\in\{0,0.05,0.1,0.2, 0.4\}$ and $N=\{1,5,25,100,200\}$ where $\pi$ is the output policy of FQI-AIR. 
We show the 90th percentile of the difference for each combination of $\varair$ and $N$ over 90 data points (30 runs under each data collection policy) in Figure \ref{lineplot}. The 90th percentiles scale approximately linearly with $\varair$ and inversely proportional to $N$. The results suggest that the dependence on $\varair$ is linear and the policy evaluation error goes to zero at a sublinear rate, which reflects the bound of Theorem \ref{theoremope}.

\begin{figure}[t]
    \centering
    \subfigure[Optimal order execution problem]
    {
        \centering
         \includegraphics[width=0.45\textwidth]{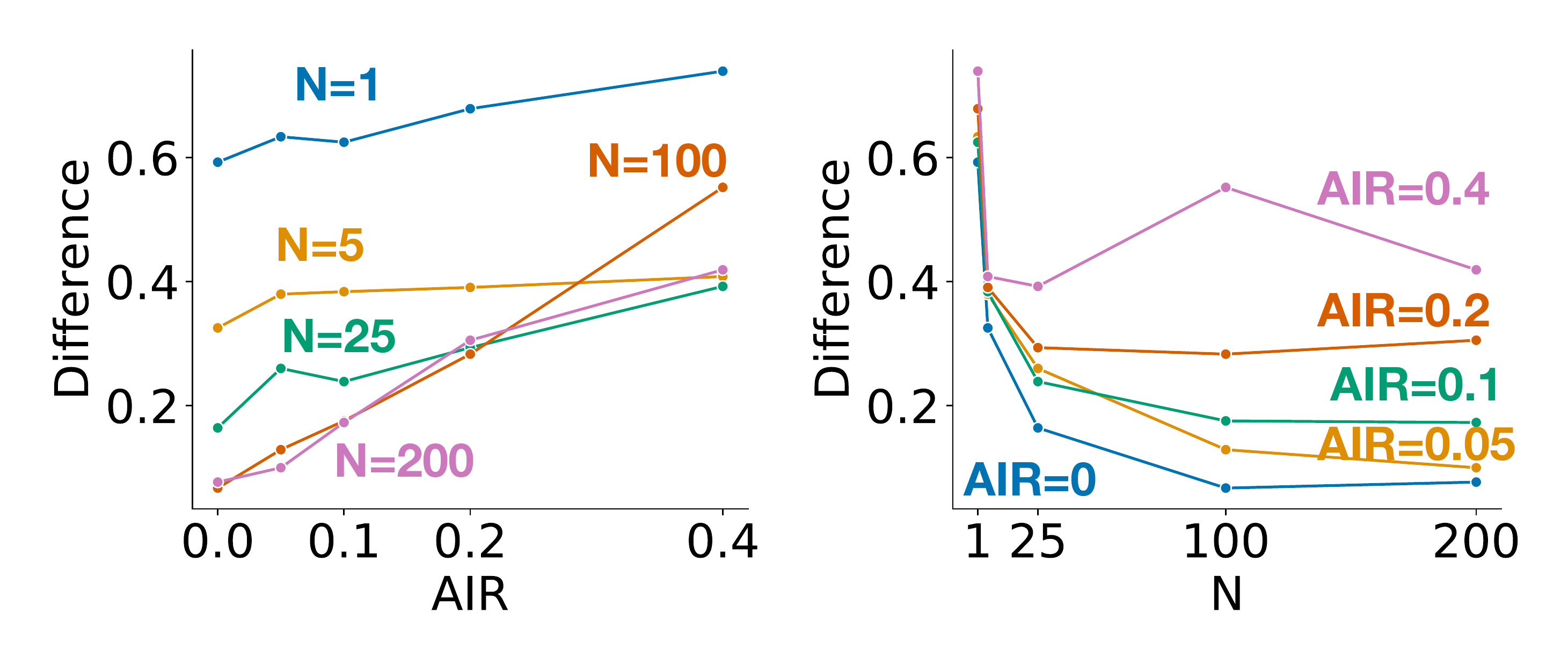}
    }
    \subfigure[Inventory management problem]
    {
        \centering
        \includegraphics[width=0.45\textwidth]{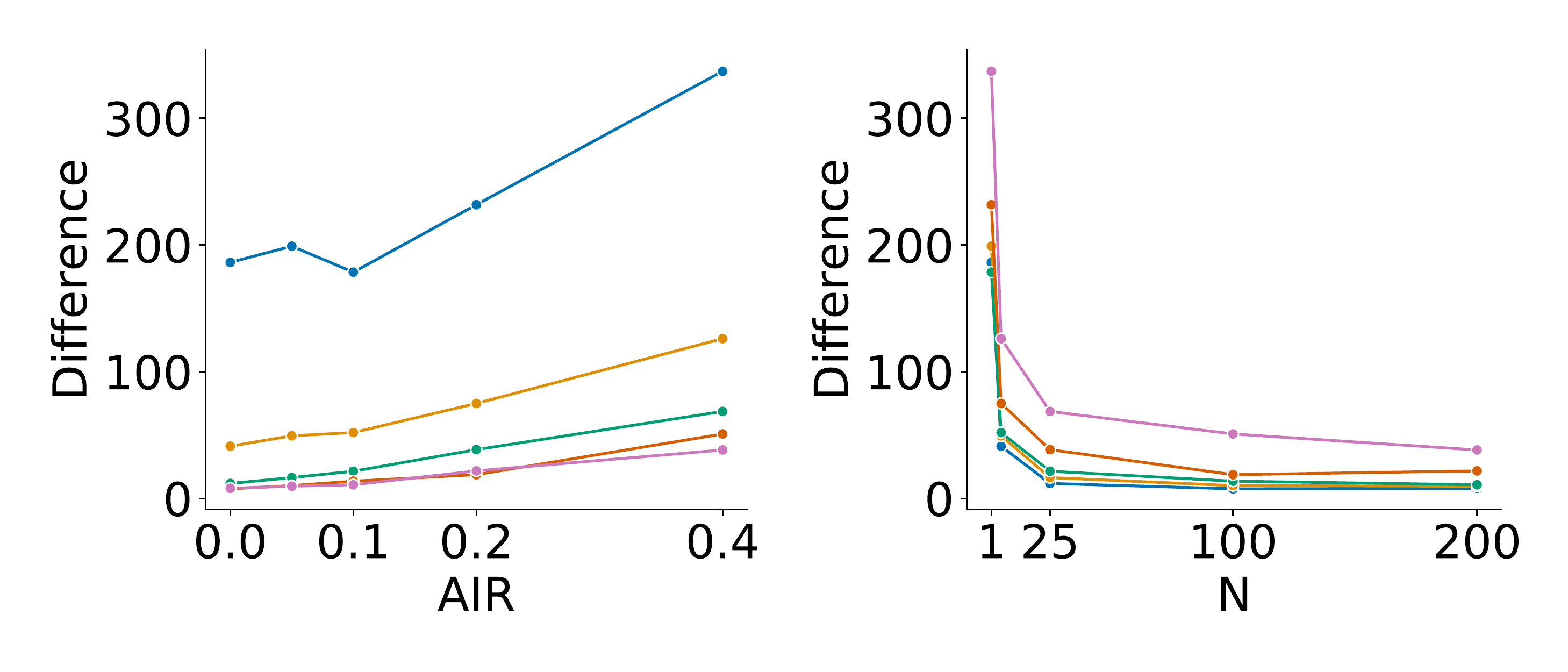}
    }
    \caption{The 90th percentile, over 90 runs, of the difference $|\hat J(\pi,M) - J(\pi, M)|$, with varying $N$ and $\varair$. The difference should be lower for a larger $N$ and smaller $\varair$.
     }
    \label{lineplot}
\end{figure}

\section{Real World Experiments}
To demonstrate the practicality of the proposed algorithm, we evaluate the proposed algorithm for two real world experiments: (1) \emph{Bitcoin}: an optimal order execution for the bitcoin market, and (2) \emph{Prius}: a hybrid car control problem. 
For the Bitcoin experiment, we use historical prices of bitcoin.\footnote{The bitcoin data is downloaded from the kaggle competition \url{https://www.kaggle.com/c/g-research-crypto-forecasting/data}.} The problem is to sell one bitcoin within 60 steps where each step corresponds to 10 minutes in real world. On each step, the agent chooses to sell some numbers of bitcoins in $\{0,0.1,0.2,0.3,0.4,0.5\}$. 
Each episode corresponds to 10 hours, with a start state chosen from a random time step in the data (consisting of 300~days).  The exogenous state contains the most recent 60 closing prices, and the endogenous state contains the number of shares left to be sold. We collect an offline dataset by running a trained policy by DQN for $N$ episodes, and report performance of the output policy for the testing period (about 41~days). 

For the Prius experiment, we use the hybrid car environment from \citeA{lian2020rule}.\footnote{We used their code at \url{https://github.com/lryz0612/DRL-Energy-Management/blob/master/Prius_model_new.py}.} The agent can switch between using fuel or battery, with the goal to minimize fuel consumption while maintaining a desired battery level. The exogenous state is the driving patterns and the endogenous state contains the state of charge and the previous action. We collect the offline dataset by running a learned policy with 10 different driving patterns, and test on 12 driving patterns. To better mimic the real-world, where we would not have a random policy or constant policy, we use the learned policy from DQN as the data collection policy. Further, now that the state space is larger, we run FQI-AIR where we randomly sample endogenous states and actions, rather than sweeping through all endogenous states and actions.

FQI-AIR performs significantly better than CQL, IQL and FQI, as shown in Figure \ref{exp:realworld}. MBS-QI does not scale to high-dimensional continuous state spaces, and so is excluded. These results highlight that FQI-AIR can scale to high-dimensional continuous state space and large endogenous state space. 

\begin{figure}[t]
    \centering
    \subfigure[Bitcoin]
    {
        \centering
         \includegraphics[scale=0.34]{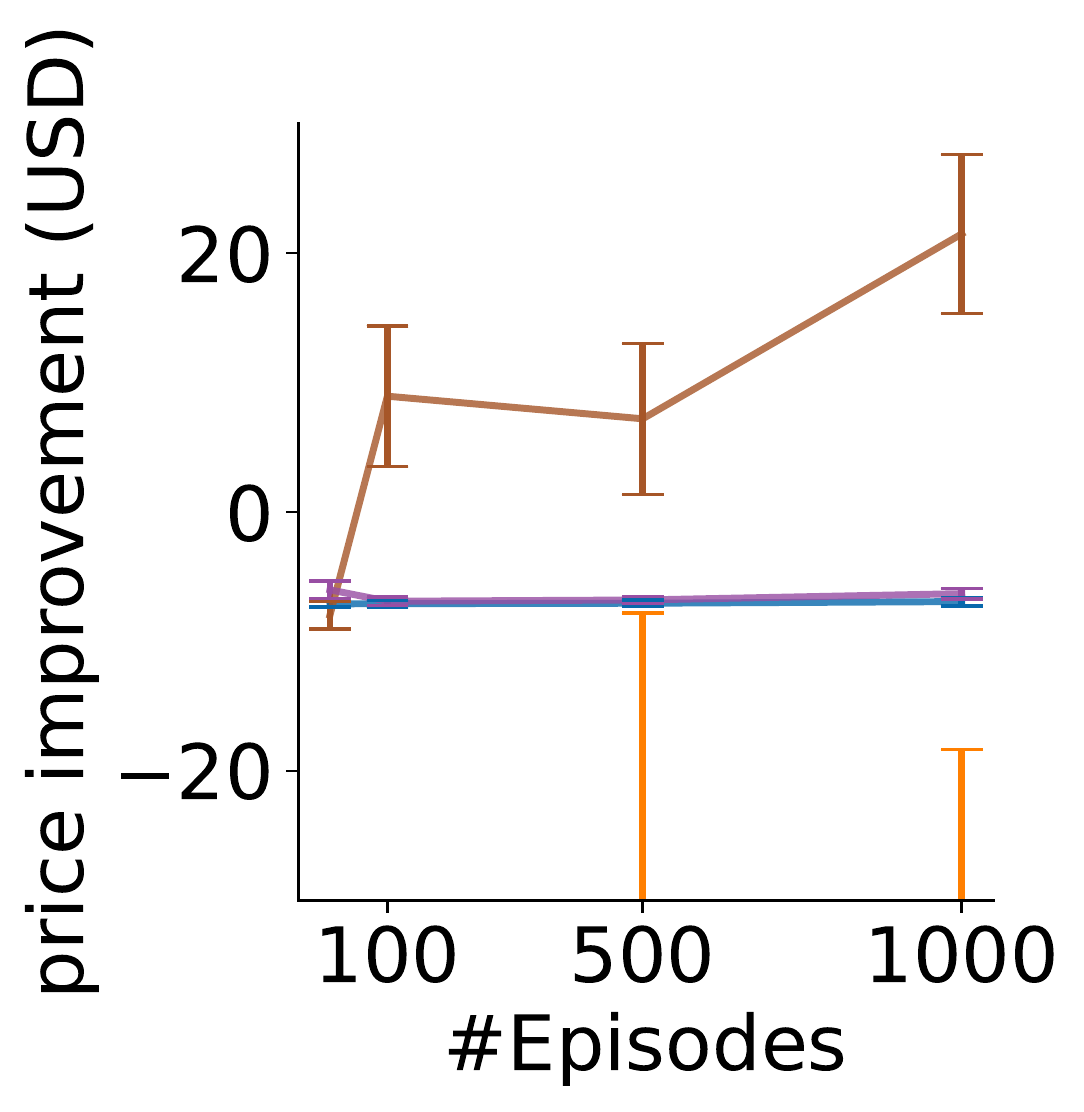}
    }
    \subfigure[Prius]
    {
        \centering
        \includegraphics[scale=0.34]{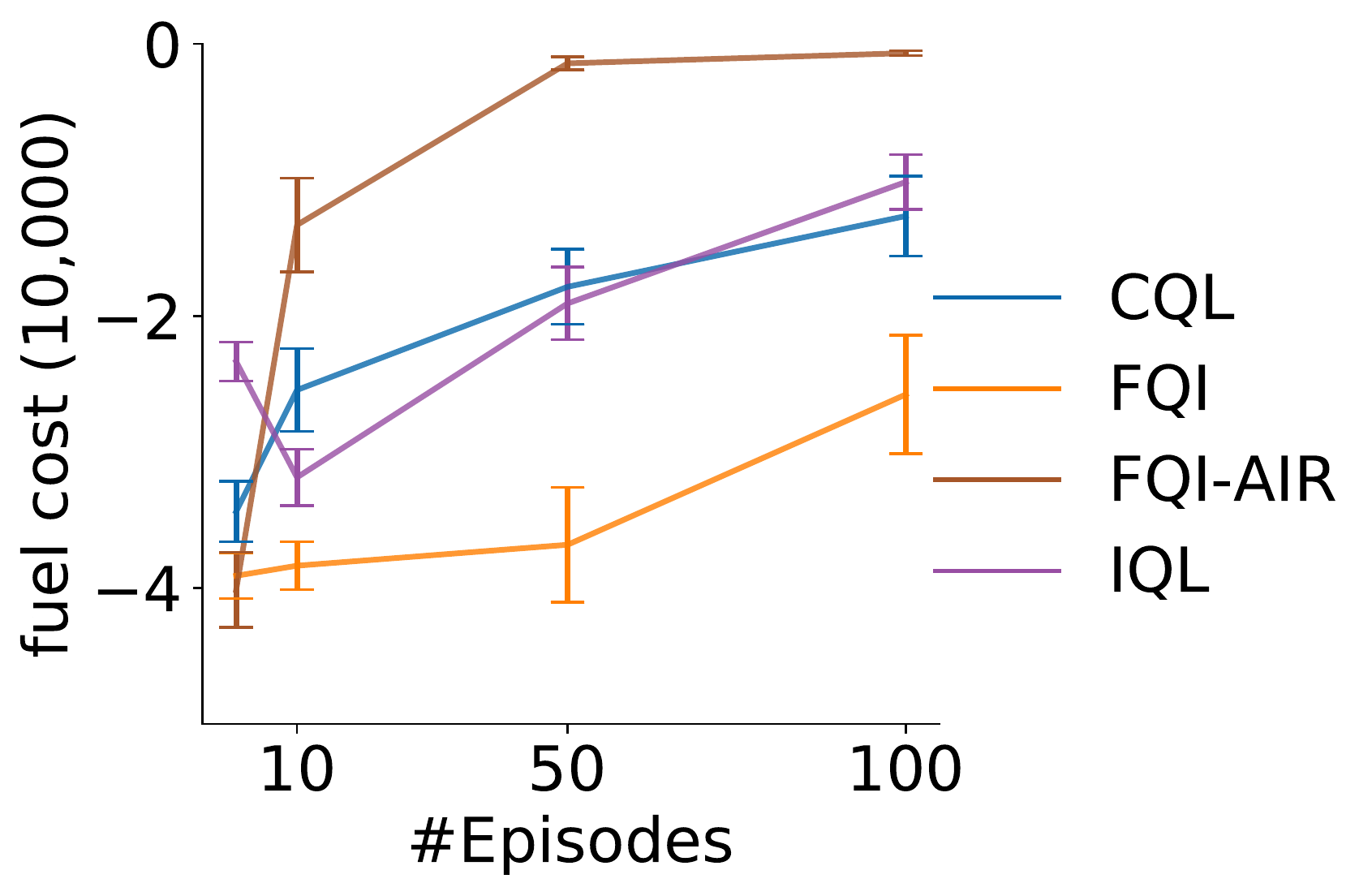}
    }
    \caption{Performance on real world datasets.
    For (a), the numbers represent the total selling price minus the average price. For (b), the numbers represent the fuel cost with a penalty for not maintaining a desired batter level. 
    Results are averaged over 30 runs, shown with one standard error.
    }
    \label{exp:realworld}
\end{figure}

\section{Learning an Endogenous Model in AIR-MDPs}
In the previous experiments, we assume we are given the endogenous models. In this section, we investigate the impact of using an approximate endogenous model learned from offline data. We perform this experiment in the hybrid car environment, which reflects a setting where the endogenous dynamics might in fact not be known and would be useful to estimate from data. 
We use a neural network to approximate the endogenous state and reward model, and run FQI-AIR with the learned endogenous model. 

Let us first reason about when it might be feasible to learn a reasonably accurate endogenous model.
In the worse case, learning an accurate endogenous and reward model would require data coverage for the entire state space. 
However, in many practical scenarios, the endogenous model can be easy to learn and does not require full data coverage. 
For example, in the optimal order execution problem, the endogenous dynamics does not depend on the exogenous variables, as a result, we only need coverage for the endogenous state. 
In the Prius environment, collecting data from just one driving cycle is sufficient to learn a good endogenous model, as we will demonstrate in these experiments.

We first collect a dataset from the hybrid car environment by running a random policy in one of the driving cycles and a deterministic policy for the other driving cycles. This data generation approach mimics a scenario we might see in practice. In the factory, we might have a test system for which it is acceptable to try many different actions (using gas or the battery), and so get a more varied dataset for learning the endogenous model. We would only get this data from one limited course (one driving cycle). The rest of the data would be collected in the wild, where the deployed solution should not be exploring many actions and should largely be deterministic.  

We also test two model-based baselines (MB):
(1) The first baseline has full knowledge of the reward and endogenous models, and learns the exogenous model from offline data without exploiting the AIR property. 
The algorithm is similar to Algorithm \ref{fqi-air} but $\env{s_{h+1}}$ is generated from the learned exogenous model.
(2) The second baseline does not have knowledge of the reward, endogenous models and exogenous models. It learns a full model to from a state-action pair to the next state. 
For these model-based baselines, the model is parameterized by a two-layer neural network and learned by minimizing the $\ell_2$ distance between predicted states and next states recorded in the data. The transitions in these environments are deterministic, so it is appropriate to learn an expectation model. 

In Figure \ref{exp:prius} (a), we perform an ablation study to compare FQI-AIR and MB with the true endogenous model or a learned endogenous model. 
The result shows that 
(1) MB with the true endogenous model performs slightly worse than FQI-AIR with a small data size. 
(2) FQI-AIR with a learned endogenous model perform worse than FQI-AIR, however, it outperforms IQL and MB without the true endogenous model.  
(3) MB with a learned endogenous model performs worse than FQI-AIR with a learned endogenous model. This suggests that it is useful to separate the exogenous state and endogenous state especially when we need to learn an endogenous model.

Next, we test FQI-AIR when learning the endogenous model only from a more limited dataset: a dataset based solely on one cycle. 
We collect a dataset from the hybrid car environment by running a random policy in one of the driving cycles for 500 episodes. 
This reflects a practical scenario that we can just run our vehicle in a closed area and still are able to obtain a good endogenous model for running FQI-AIR.
Figure \ref{exp:prius} (b) shows that FQI-AIR with the learned endogenous model performs well and is close to FQI-AIR with true endogenous model.

\begin{figure}[t]
    \centering
     \subfigure[Comparison to model-based baselines]
    {
        \centering
        \includegraphics[scale=0.3]{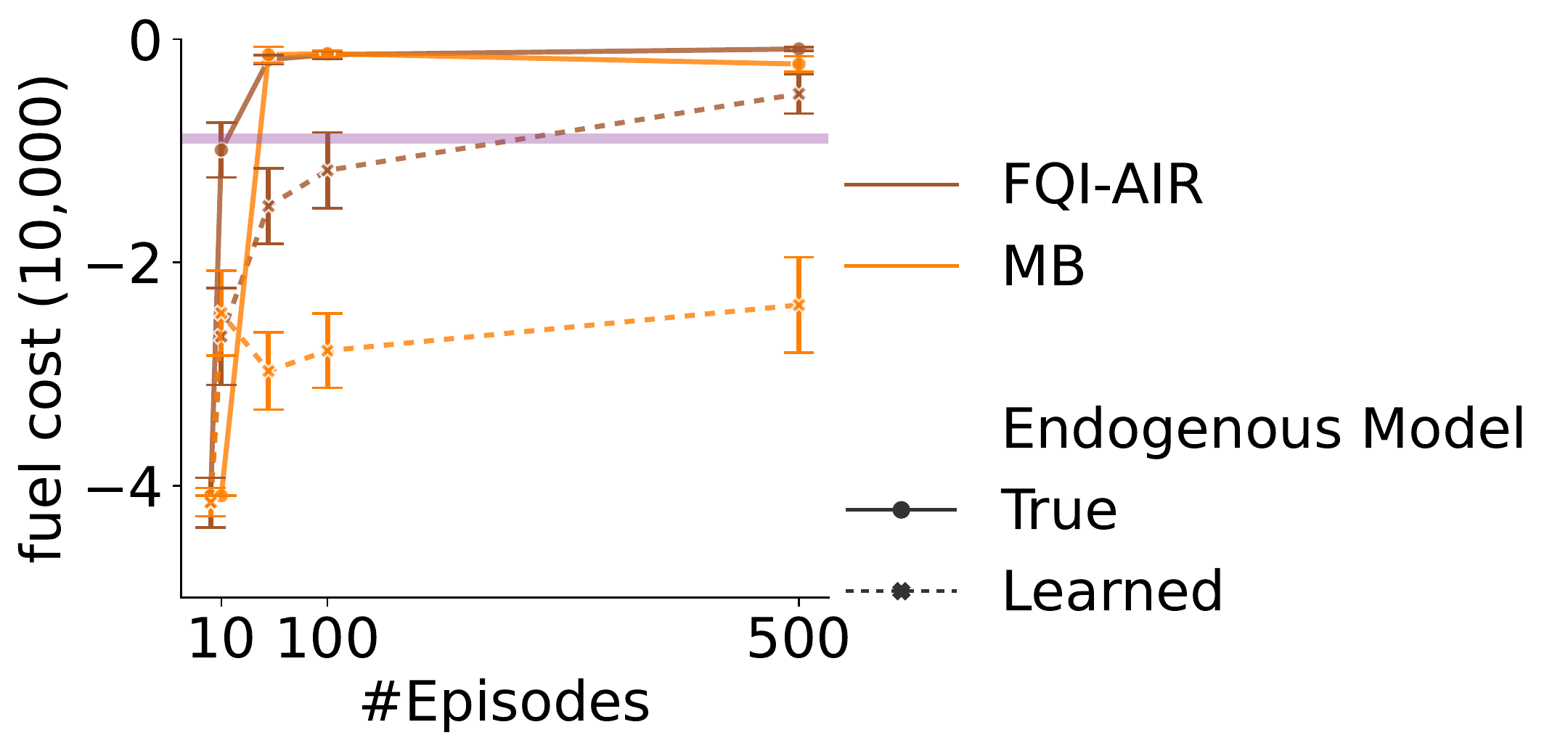}
    }
    \subfigure[Learning the endogenous model with data from one cycle]
    {
        \centering
        \includegraphics[scale=0.3]{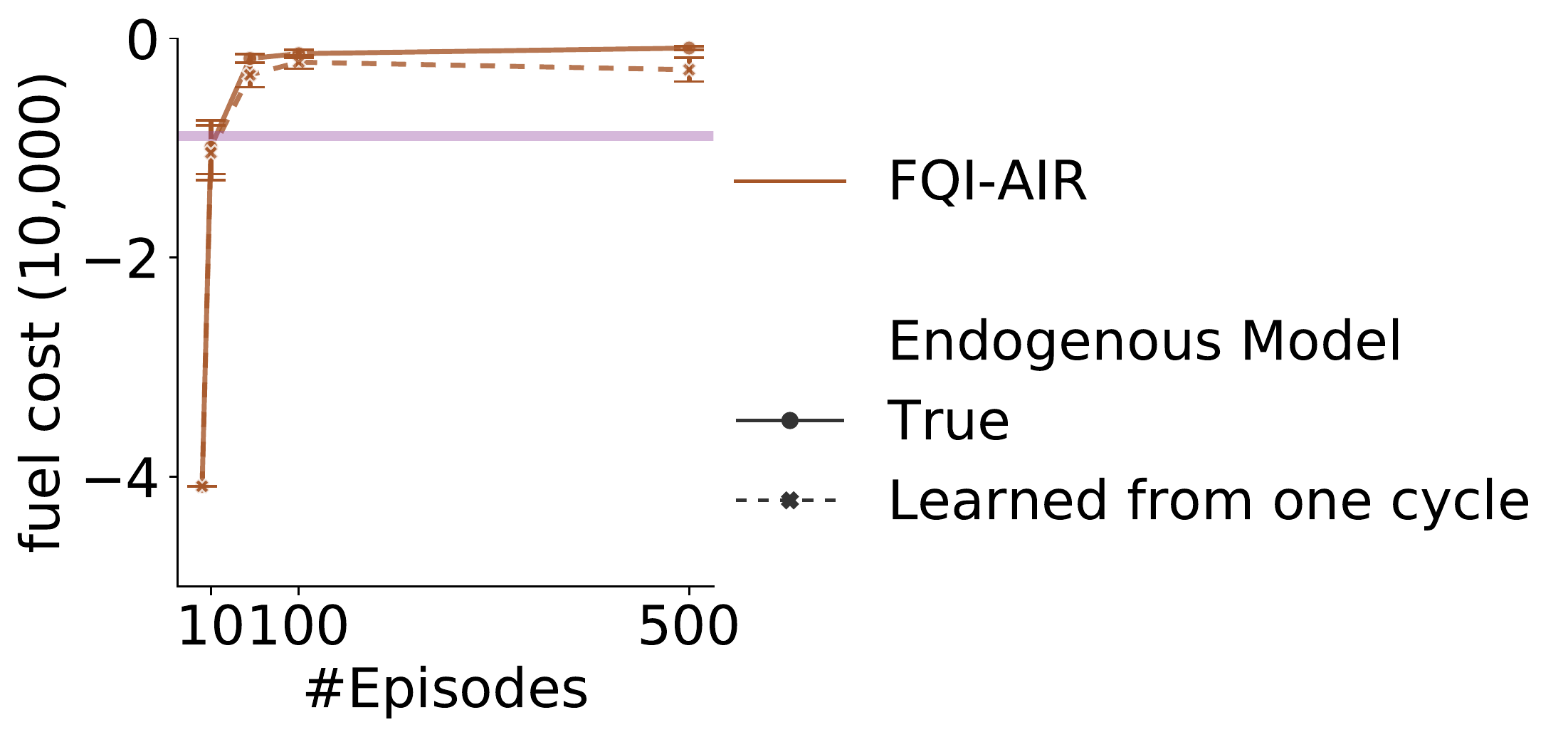}
    }
    \caption{Experiment on the Prius dataset. We include IQL with 500 episodes as a baseline (purple line). Results are averaged over 30 runs, shown with one standard error.
    }
    \label{exp:prius}
\end{figure}

\section{Conclusion}
\label{section:conclusion}
In this paper, we aim to understand whether and when offline RL is feasible for real world problems. It is known that, without extra assumptions on MDPs, offline RL requires exponential numbers of samples to obtain a nearly optimal policy with high probability even if we have a good data distribution. As a result, we must make assumptions on MDPs to make learning feasible (learning with polynomial sample complexity). However, common assumptions can be impractical as discussed in Section \ref{sec:ass}. Therefore, our goal in this paper is to study an MDP property that (1) is realistic for several important real world problems and (2) makes offline RL feasible. 

We introduced an MDP property, which we call Action Impact Regularity (AIR). We developed an algorithm for MDPs satisfying AIR, that (1) has strong theoretical guarantees on the supoptimality, without making assumptions about concentration coefficients or data coverage, (2) provides a simple way to select hyperparameters offline, without introducing extra hyperparameters and (3) is simple to implement and computationally efficient.
We showed empirically that the proposed algorithm significantly outperforms existing offline RL algorithms, across two simulated environments and two real world environments.

\appendix

\section{Experiment Details}
\label{section:details}

In this section, we provide more experimental details.

\paragraph{Algorithm Details}

The original MBS-QI algorithm does not work with negative rewards. The reward in the inventory problem is in the range $[-100,0]$, so we modify the pessimistic value for MBS-QI for the problem:
\begin{align*}
    \Tilde q_h(s,a) \defeq 
    \begin{cases}
        q_h(s,a), &\text{if }\hat\mu(s,a) \geq b\\
        -10000, &\text{if }\hat\mu(s,a) < b
    \end{cases}
\end{align*}
instead of $\Tilde q_h(s,a) \defeq \II\{\hat\mu(s,a) \geq b\} q_h(s,a)$ where $\hat\mu$ is the estimated data distribution. 

\paragraph{Data Collection Policies}
To collect data for the learned policy, we first train a DQN algorithm for 1000 episodes with online interaction with the underlying environment. The DQN parameters (that is, learning rate and optimizer) are chosen based on estimated $J(\pi,M)$. After training, we collect data by running the trained policy. 

\section{Theoretical Analysis}

We first provide definitions and lemmas that would be useful for proving our main theorems. 
Given $q\in\RR^{\cS\times\cA}$, $\beta_h$ be a probability measure on $\cS\times\cA$ and $p>0$, define $\norm{q}_{p,\beta_h}^p = \sum_{s\in\cS, a\in\cA} \beta_h(s,a) |q(s,a)|^p$. Given $q\in\RR^{\cS\times\cA}$, we define the Bellman evaluation operator for a given policy $\pi$:
\begin{align*}
    (\bellman_{\pi}q) (s,a) \defeq r(s,a) + \sum_{s'\in\cS} P(s,a,s') \sum_{a'\in\cA} \pi(a'|s') q(s',a').
\end{align*}

\begin{lemma}
    \label{pdlemma}
    Given a MDP $M$, suppose the sequence of value function $(q_0,\dots,q_{H-1})$ satisfies that $\norm{\bellman q_{h+1} - q_h}_{2,d^\pi_h} \leq \varepsilon$ for all policy $\pi$ with $q_H=0$. Let $\pi$ be the greedy policy with respect to $(q_0,\dots,q_{H-1})$, then we have that
    \begin{align*}
        J(\pi,M) \geq J(\pi_{M}^*,M) - (H+1)H\varepsilon.
    \end{align*}
\end{lemma}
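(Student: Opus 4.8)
The plan is to establish a performance-difference decomposition of the form $J(\pi_M^*, M) - J(\pi, M)$ into a telescoping sum over the horizon, where each term measures the one-step Bellman error of the sequence $(q_0, \dots, q_{H-1})$ along a reachable state-action distribution. The standard way to do this is to compare the value of the greedy policy $\pi$ against the value function estimates $q_h$ themselves at each step, then compare the $q_h$ estimates against the value of the optimal policy. Concretely, I would first bound $v_h^{\pi_M^*}(s) - q_h(s, \pi_h^*(s))$ or work with the expected forms $\EE_{\nu}[v_0^{\pi^*} - \max_a q_0(S_0, a)]$ and $\EE_{\nu}[\max_a q_0(S_0,a) - v_0^\pi]$ separately, since one direction uses the greedy choice of $\pi$ and the other uses the fact that $q_h$ approximately satisfies the Bellman optimality recursion.

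The key computational step is induction on $h$ (working backward from $H-1$ to $0$). Define the per-step error $\delta_h := \bellman q_{h+1} - q_h$, which by hypothesis satisfies $\norm{\delta_h}_{2, d_h^\pi} \le \varepsilon$ for every policy $\pi$. I would unroll the recursion so that the suboptimality at step $0$ is expressed as $\sum_{h=0}^{H-1}$ of expectations of $\delta_h$ (and related one-step errors) under the state-action distribution $d_h^{\pi'}$ for appropriate comparison policies $\pi'$ — these comparison policies are mixtures that follow $\pi^*$ or $\pi$ for some prefix and then switch, which is why the bound must hold for \emph{all} policies $\pi$. Each of these expectations is an $\ell_1$-type average, which I would bound by the $\ell_2$ norm $\norm{\delta_h}_{2, d_h^{\pi'}} \le \varepsilon$ via Jensen / Cauchy-Schwarz. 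Counting the terms carefully: there are $H$ backward steps, and at each step the accumulated error from the greedy-policy side and the Bellman-recursion side contributes, leading to the $(H+1)H$ factor — roughly $H$ from the telescoping depth and another $H+1$ from the number of error terms that pile up at each level (or equivalently, the error $\delta_h$ at step $h$ propagates through the remaining $H-h$ steps, and $\sum_h (H-h) = O(H^2)$, with the extra $+1$ coming from the boundary term $q_H = 0$ matching $v_H^\pi = 0$ exactly but the greedy gap at the last step still contributing).

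The main obstacle I anticipate is bookkeeping the constants correctly to land exactly on $(H+1)H\varepsilon$ rather than a loose $O(H^2)\varepsilon$ — in particular, being careful that the distributions appearing in the telescoped sum are genuinely of the form $d_h^{\pi'}$ for some valid (possibly non-stationary, possibly randomized) policy $\pi'$, so that the hypothesis "$\norm{\bellman q_{h+1} - q_h}_{2, d_h^\pi} \le \varepsilon$ for all $\pi$" actually applies. A secondary subtlety is that the greedy policy $\pi$ with respect to $(q_0, \dots, q_{H-1})$ may not equal the greedy policy with respect to $(\bellman q_1, \dots, \bellman q_H)$, so the argument that $\max_a q_h(s,a) \ge q_h(s, a^*)$ must be combined with the error term rather than used to get an exact inequality; handling this clean separation (what comes "for free" from greediness versus what costs $\varepsilon$) is where the proof needs care. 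Once the decomposition is in place, the rest is a routine application of Cauchy-Schwarz and summation.
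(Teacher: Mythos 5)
Your plan matches the paper's proof in all essentials: a performance-difference decomposition against $\pi^*$, the greedy property of $\pi_h$ to split the gap into two $|q^*-q_h|$ terms, backward propagation of the error $\bellman q_{h+1}-q_h$ along state-action distributions induced by comparison policies (which is exactly why the hypothesis must hold for all $\pi$), an $\ell_1\le\ell_2$ step, and the count $\sum_{h=0}^{H-1}2(H-h)=(H+1)H$. This is the same argument the paper gives, so you only need to carry out the bookkeeping you describe.
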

\begin{proof}
    By the performance difference lemma (Lemma 5.2.1 of \citeA{kakade2003sample} or \citeA{chen2019information}), we get
    \begin{align*}
        &J(\pi^*, M_D)- J(\pi,M_D) \\
        &= \EE^{\pi^*}\left[\sum_{h=0}^{H-1} q^*(S_h,A_h) - q^*(S_h,\pi_h(S_h))\right] \\
        &\leq \EE^{\pi^*}\left[\sum_{h=0}^{H-1} q^*(S_h,A_h) - q_h(S_h,A_h) + q_h(S_h,\pi_h(S_h)) - q^*(S_h,\pi_h(S_h))\right] \\
        &\leq \EE^{\pi^*}\left[\sum_{h=0}^{H-1} |q^*(S_h,A_h) - q_h(S_h,A_h)| + |q_h(S_h,\pi_h(S_h)) - q^*(S_h,\pi_h(S_h))|\right] \\
        &\leq \sum_{h=0}^{H-1} \norm{q^* - q_h}_{1,d^{\pi^*}_h\pi^*} + \norm{q^* - q_h}_{1,d^{\pi^*}_h\pi_h} \\
        &\leq \sum_{h=0}^{H-1} \norm{q^* - q_h}_{2,d^{\pi^*}_h\pi^*} + \norm{q^* - q_h}_{2,d^{\pi^*}_h\pi_h}
    \end{align*}
    where $d^\pi_h$ is the state-action distribution at horizon $h$ induced by policy $\pi$. The first inequality follows because $\pi_h$ is greedy with respect to $q_h$.
    We consider a state-action distribution $\beta_0$ that is induced by some policy, then
    \begin{align*}
        \norm{q^* - q_0}_{2,\beta_0} 
        &\leq \norm{\bellman q^* - \bellman q_{1} + \bellman q_{1} - q_0}_{2,\beta_0} \\
        &\leq \norm{\bellman q^* - \bellman q_{1}}_{2,\beta_0} + \norm{\bellman q_1 - q_0}_{2,\beta_0} \\
        &\leq \norm{q^* - q_1}_{2,\beta_{1}} + \varepsilon
    \end{align*}
    where $\beta_1(s',a') = \sum_{s,a} \beta_0(s,a) P(s,a,s')\II\{a'=\arg\max_{a''\in\cA} (q^*(s',a'') - q_1(s',a''))^2\}$ is also induced by some policy. 
    The first inequality follows by the fact that $q^*$ is the fixed point of the operator $\bellman$. 
    We can recursively apply the same process for $\norm{q^*- q_h}_{2,\beta_h}$, $h>0$, and we can get 
    \begin{align*}
        \norm{q^* - q_h}_{2,\beta_h} \leq (H-h) \varepsilon.
    \end{align*}
    Plug in the inequality to the performance difference lemma, we get
    \begin{align*}
        J(\pi^*, M_D)- J(\pi,M_D) 
        &\leq \sum_{h=0}^{H-1} 2 (H-h) \varepsilon  = (H+1)H \varepsilon.
    \end{align*}
\end{proof}

The simulation lemma was first introduced for discounted setting in \cite{kearns2002near}, and here we prove a modified version of the simulation lemma for the finite horizon MDPs. 
\begin{restatable}[Finite Horizon Simulation Lemma]{lemma}{simlemma}
    \label{simlemma}
    Given a policy $\pi$, 
    let $M=(\cS,\cA,P,r,H,\mu)$ and $\hat M=(\cS,\cA,\hat P,r,H,\mu)$ be two finite horizon MDPs and $\pi$ be a policy satisfying that for all $s\in\cS$ and $a\in\cA$, $\norm{P(s,a) - \hat P(s,a)}_{1} \leq \varepsilon$.
    Then,  $|v^\pi_M(s) - v^\pi_{\hat M}(s)| \leq \varepsilon \frac{H^2 \rmax}{2}$ for $s\in\cS_0$.
\end{restatable}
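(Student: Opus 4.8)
The plan is to prove this by a backward recursion over the horizon, tracking how an $\varepsilon$-sized perturbation of the transition kernel propagates into the values. First I would introduce step-indexed value functions: for $h\in[H]$ let $v^\pi_{M,h}(s) \defeq \EE^\pi_M[\sum_{t=h}^{H-1} r(S_t,A_t)\mid S_h=s]$, so that $v^\pi_{M,H}\equiv 0$ and, for $h<H$,
$v^\pi_{M,h}(s) = \sum_{a\in\cA}\pi(a|s)\big(r(s,a) + \sum_{s'\in\cS} P(s,a,s')\, v^\pi_{M,h+1}(s')\big)$, and likewise for $\hat M$ with $\hat P$ in place of $P$. Because of the disjoint-layer convention on $\cS$, for $s\in\cS_0$ we have $v^\pi_M(s)=v^\pi_{M,0}(s)$, so it suffices to bound $e_h\defeq \norm{v^\pi_{M,h}-v^\pi_{\hat M,h}}_\infty$ and show $e_0\le \varepsilon H^2\rmax/2$.

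Next I would derive a one-step inequality relating $e_h$ to $e_{h+1}$. Fixing $s$ and $a$, I write the relevant difference $\sum_{s'} P(s,a,s')v^\pi_{M,h+1}(s') - \sum_{s'}\hat P(s,a,s')v^\pi_{\hat M,h+1}(s')$ and add and subtract $\sum_{s'}\hat P(s,a,s')v^\pi_{M,h+1}(s')$, splitting it into (i) $\sum_{s'}\big(P(s,a,s')-\hat P(s,a,s')\big)v^\pi_{M,h+1}(s')$ and (ii) $\sum_{s'}\hat P(s,a,s')\big(v^\pi_{M,h+1}(s')-v^\pi_{\hat M,h+1}(s')\big)$. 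Term (i) is bounded by H\"older's inequality by $\norm{P(s,a)-\hat P(s,a)}_1\,\norm{v^\pi_{M,h+1}}_\infty \le \varepsilon(H-h-1)\rmax$, since $v^\pi_{M,h+1}$ is a sum of at most $H-h-1$ rewards, each in $[0,\rmax]$. Term (ii) is at most $e_{h+1}$ because $\hat P(s,a)$ is a probability distribution. Taking the convex combination over $a$ with weights $\pi(a|s)$ and then the supremum over $s$ yields $e_h \le \varepsilon(H-h-1)\rmax + e_{h+1}$.

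Finally I would unroll this recursion from $e_H=0$, obtaining $e_0 \le \varepsilon\rmax\sum_{h=0}^{H-1}(H-h-1) = \varepsilon\rmax\,\tfrac{H(H-1)}{2} \le \tfrac{\varepsilon H^2\rmax}{2}$, which is exactly the claimed bound. The main obstacle is not any single hard step but the bookkeeping: making sure the remaining-horizon bound $\norm{v^\pi_{M,h+1}}_\infty\le (H-h-1)\rmax$ is used (rather than the crude $H\rmax$) so that the arithmetic series closes at order $H^2$, and handling the step-indexing cleanly even though $\pi$ is nominally stationary. One refinement I would mention but not pursue, since the target constant is already looser: replacing $v^\pi_{M,h+1}$ in term (i) by $v^\pi_{M,h+1}$ minus the midpoint of its range, which is legitimate because $\sum_{s'}\big(P(s,a,s')-\hat P(s,a,s')\big)=0$, improves the constant to $H(H-1)/4$.
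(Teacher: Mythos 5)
Your proof is correct and follows essentially the same route as the paper's: the same add-and-subtract decomposition of the transition kernels, a H\"older bound of $\varepsilon(H-h-1)\rmax$ on the perturbation term using the remaining-horizon bound on the value, and an unrolled recursion summing to $\varepsilon \rmax H(H-1)/2 \le \varepsilon H^2 \rmax/2$. The only difference is presentational: you make the step-indexed value functions and the sup-norm error $e_h$ explicit, whereas the paper exploits the disjoint-layer convention $\cS_h$ to write the same recursion directly on $v^\pi$.
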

\begin{proof}
    For all $s\in\cS_0,a\in\cA$, 
    \begin{align*}
        & |v^\pi_M(s) - v^\pi_{\hat M}(s)| \\
        &= |r_\pi(s) + \sum_{a\in\cA} \pi(a|s) \sum_{s'\in\cS_1}  P(s,a,s') v^\pi_M(s') - r_\pi(s) - \sum_{a\in\cA} \pi(a|s) \sum_{s'\in\cS_1} \hat P(s,a,s') v^\pi_{\hat M}(s')| \\
        &= |\sum_{a\in\cA} \pi(a|s) \sum_{s'\in\cS_1} P(s,a,s') v^\pi_M(s') - \sum_{a\in\cA} \pi(a|s) \sum_{s'\in\cS_1} \hat P(s,a,s') v^\pi_{\hat M}(s')| \\
        &= |\sum_{a\in\cA} \pi(a|s)\sum_{s'\in\cS_1} P(s,a,s') v^\pi_M(s') - \sum_{a\in\cA} \pi(a|s)\sum_{s'\in\cS_1} \hat P(s,a,s')v^\pi_{M}(s') \\&\ \ \ \ + \sum_{a\in\cA} \pi(a|s)\sum_{s'\in\cS_1} \hat P(s,a,s') v^\pi_{M}(s') - \sum_{a\in\cA} \pi(a|s)\sum_{s'\in\cS_1} \hat P(s,a,s') v^\pi_{\hat M}(s')| \\
        &\leq \sum_{a\in\cA} \pi(a|s) |\sum_{s'\in\cS_1} (P(s,a,s') - \hat P(s,a,s')) v^\pi_{M}(s')| + \max_{s'\in\cS_1} |v^\pi_{M}(s') - v^\pi_{\hat M}(s')| \\
        &\leq \varepsilon (H-1) \rmax + \max_{s'\in\cS_1} |v^\pi_{M}(s') - v^\pi_{\hat M}(s')| \\
        &\leq \varepsilon (H-1) \rmax + \varepsilon_P (H-2) \rmax + \dots + \varepsilon_P 1 \rmax \\
        &\leq \varepsilon \frac{H^2 \rmax}{2} = \varepsilon \frac{H\vmax}{2}
    \end{align*}
    The first equality follows from the Bellman equation. The second and third inequalities follow from that $v^\pi_{\hat M}(s)$ is at most $(H-h)\rmax$ for $s\in\cS_h$.
\end{proof} 

\begin{lemma}
    \label{lemma3}
    \sloppy Let $M=(\cS,\cA,\Penv,\Ppri,r,H)$ be an $\varair$-AIR MDP and $M_b=(\cS,\cA,\TPenv,\TPpri,r,H)$ with $D_{TV}(\Ppri(s,a),\TPpri(s,a))\leq\varp$, then for any policy $\pi$, $$|J(\pi, M) - J(\pi, M_{b})| \leq \vmax H(\varair+\varp).$$
\end{lemma}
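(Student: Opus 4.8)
The plan is to bound the difference $|J(\pi,M)-J(\pi,M_b)|$ by comparing the two MDPs through an intermediate MDP and applying the Finite Horizon Simulation Lemma (Lemma \ref{simlemma}). The two MDPs differ in two ways: $M$ has exogenous transition $\Penv(\senv,a)$ that depends on $a$, while $M_b$ has an action-independent exogenous transition $\TPenv$; and $M$ has true endogenous transition $\Ppri$, while $M_b$ has the approximate $\TPpri$. Note that the rewards, horizon, and initial distribution agree. The strategy is to convert both discrepancies into a single bound on the $\ell_1$ distance between the full transition kernels $P$ and $\Tilde P$, uniformly over $(s,a)$, and then invoke Lemma \ref{simlemma}.

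First I would write the full transition kernel of $M$ as $P(s_1,a,s_2)=\Penv(\env{s_1},a,\env{s_2})\Ppri(s_1,a,\private{s_2})$ and that of $M_b$ as $\Tilde P(s_1,a,s_2)=\TPenv(\env{s_1},a,\env{s_2})\TPpri(s_1,a,\private{s_2})$, and bound $\norm{P(s,a)-\Tilde P(s,a)}_1$ using the standard triangle-inequality decomposition for products of distributions:
\begin{align*}
    \norm{P(s,a)-\Tilde P(s,a)}_1 \leq \norm{\Penv(\senv,a)-\TPenv(\senv,a)}_1 + \norm{\Ppri(s,a)-\TPpri(s,a)}_1 .
\end{align*}
The second term is at most $2\varp$ by the assumption in the lemma (converting from $D_{TV}$ to $\ell_1$). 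For the first term, I need to show $\norm{\Penv(\senv,a)-\TPenv(\senv,a)}_1\leq 2\varair$; this is where the AIR property enters. The key observation is that $\TPenv(\senv,a,\cdot)=\bbP^{\pi_b}_M(\env{S_{h+1}}=\cdot\mid\env{S_h}=\senv)$ is a convex combination (a $\pi_b$-weighted average over actions $a'$) of the kernels $\Penv(\senv,a',\cdot)$, so by convexity of total variation distance and the $\varair$-AIR property ($D_{TV}(\Penv(\senv,a),\Penv(\senv,a'))\leq\varair$ for all $a,a'$), we get $D_{TV}(\Penv(\senv,a),\TPenv(\senv,a))\leq\varair$, i.e. the $\ell_1$ distance is at most $2\varair$. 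Hence $\norm{P(s,a)-\Tilde P(s,a)}_1\leq 2(\varair+\varp)$ for all $(s,a)$.

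Then I would apply Lemma \ref{simlemma} with $\varepsilon=2(\varair+\varp)$, which gives $|v^\pi_M(s)-v^\pi_{M_b}(s)|\leq 2(\varair+\varp)\cdot\frac{H^2\rmax}{2}=(\varair+\varp)H^2\rmax$ for $s\in\cS_0$. Finally, since $J(\pi,M)=\EE_{s\sim\nu}[v^\pi_M(s)]$ and likewise for $M_b$ with the same $\nu$, averaging over the initial distribution gives $|J(\pi,M)-J(\pi,M_b)|\leq (\varair+\varp)H^2\rmax$. Recalling the excerpt's convention $\vmax = H\rmax$ (used in the proof of Lemma \ref{simlemma}), this equals $\vmax H(\varair+\varp)$, as claimed. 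The main obstacle is the convexity argument showing $\TPenv$ is close to $\Penv$ in total variation: one must recognize that the baseline exogenous kernel is exactly the behavior-policy-averaged true kernel and carefully use the disjointness of $\cS_h$ so that $\bbP^{\pi_b}_M(\env{S_{h+1}}=\senv_{h+1}\mid\env{S_h}=\senv_h)$ is well-defined and genuinely is such a mixture; everything else is routine. One minor subtlety is that the paper's $M_b$ also replaces $\nu$-related quantities implicitly through $\pi_b$, but since both MDPs in the lemma statement share $H$ and the rewards, and the lemma is stated for arbitrary $\pi$, only the transition comparison matters.
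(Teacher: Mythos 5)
Your proof is correct and follows essentially the same route as the paper's: bound $\norm{P(s,a)-\Tilde P(s,a)}_1 \le 2(\varair+\varp)$ uniformly in $(s,a)$ by splitting the product kernel into an exogenous part (writing $\TPenv(\senv,a,\cdot)$ as the $\pi_b$-weighted mixture of the $\Penv(\senv,a',\cdot)$ and using convexity of total variation together with the AIR property) and an endogenous part (the model-error assumption), then invoke the finite-horizon simulation lemma and average over the initial distribution. The paper performs the same product decomposition explicitly via the error term $e(s,a,\sppri)=\Ppri(s,a,\sppri)-\TPpri(s,a,\sppri)$ rather than quoting the triangle inequality for product measures, but the computation is identical.
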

\begin{proof}
    Since $M$ is $\varair$-AIR, we have 
    \begin{align*}
        D_{TV}\left(\Penv(\senv,a), \Penv(\senv,a')\right) 
        = \frac{1}{2}\mynorm{\Penv(\senv,a)-\Penv(\senv,a')}_1 
        \leq \varair
    \end{align*}
    Let $e(s,a,\sppri) = \Ppri(s,a,\sppri)-\TPpri(s,a,\sppri)$, we know $D_{TV}\left(\Ppri(s,a), \TPpri(s,a)\right) = \frac{1}{2}\mynorm{e(s,a,\cdot)}_1 = \frac{1}{2}\sum_{\sppri} |e(s,a,\sppri)| \leq \varp$.
    For any $s=(\senv,\spri)\in\Senv\times\Spri$ and $a\in\cA$, we have
    \begin{align*}
        &\mynorm{P(s,a)-\Tilde P(s,a)}_1 \\
        &= \sum_{s'} |P([\senv,\spri],a,s')-\Tilde P([\senv,\spri],a,s')| \\
        &= \sum_{s'} |\Penv(\senv,a,\spenv)\Ppri(s,a,\sppri)-\TPenv(\senv,a,\spenv)\TPpri(s,a,\sppri)| \\
        &= \sum_{s'} |\Penv(\senv,a,\spenv)(\TPpri(s,a,\sppri)+e(s,a,\sppri))-\TPenv(\senv,a,\spenv)\TPpri(s,a,\sppri)| \\
        &= \sum_{s'} |\TPpri(s,a,\sppri)[\Penv(\senv,a,\spenv)-\TPenv(\senv,a,\spenv)]+\Penv(\senv,a,\spenv)e(s,a,\sppri)| \\
        &\leq \sum_{s'}\TPpri(s,a,\sppri)|\Penv(\senv,a,\spenv)-\sum_{a'}\pi'(a'|\senv)\Penv(\senv,a',\spenv)|+ 
        \\&\ \ \ \ \sum_{s'}\Penv(\senv,a,\spenv)|e(s,a,\sppri)| \\
        &\leq \sum_{a'}\pi'(a|\senv) \sum_{\spenv}|\Penv(\senv,a,\spenv)-\Penv(\senv,a',\spenv)|+\sum_{\sppri}|e(s,a,\sppri)| \\
        &\leq 2\varair + 2\varp.
    \end{align*}
    The first inequality follows by writing $\TPenv(\senv,a,\spenv)=\sum_{a'\in\cA}\pi'(a'|\env{s})\Penv(\senv,a',\spenv)$ where $\pi'(a'|\env{s_h}) = \frac{\bbP^{\pi_b}_M(\env{S_h}=\env{s_h},A_h=a')}{\bbP^{\pi_b}_M(\env{S_h}=\env{s_h})}$ when the denominator is nonzero, and otherwise let $\pi'(\cdot|\senv_h)$ be an arbitrary distribution.
    Applying Lemma \ref{simlemma}, we get
    \begin{align*}
        |J(\pi, M) - J(\pi, M_{b})| 
        &= \sum_{s_0} \mu(s_0) |v_{M}^\pi(s_0) - v_{M_b}^\pi(s_0)| 
        \leq \vmax(\varair+\varp) H.
    \end{align*}
\end{proof}

\theoremfqi*
\begin{proof}
    
First fix a horizon $h\in[H]$ and $f'=q_{h+1}\in\cF$.
Define 
\begin{align*}
    R(f) &= \norm{\bellman f' - f}_{2,\tilde \nu_h}^2 \\ 
    &= \sum_{\env{s_h} \in \env{S_h}} \sum_{\private{s_h}\in\private{S_h}} \sum_{a\in\cA} \frac{\tilde \nu_h(\senv_h)}{|\Spri||\cA|} (f(\senv_h,\spri_h,a) - r - \EE[\max_{a'} f'(\env{S_{h+1}},\private{S_{h+1}},a')] )^2
\end{align*}
and 
\begin{align*}
    R_n(f) = \frac{1}{N}\sum_{i=1}^N \sum_{\private{s_h}\in\private{S_h}} \sum_{a\in\cA} \frac{1}{|\Spri||\cA|} (f(s^{(i),\tenv}_h,\private{s_h},a) - r - \max_{a'} f'( s^{(i),\tenv}_{h+1},\private{s_{h+1}},a'))^2 
\end{align*}
where $\private{s_{h+1}} \sim \HPpri(s^{(i),\tenv}_h,\private{s_h},a)$.
Let 
\begin{align*}
    \hat f = \arg\min_{f\in\cF} R_n(f), \tilde f = \arg\min_{f\in\cF} R(f),
\end{align*}
our goal is to bound $R(\hat f)$ with high probability. This is similar to bounding the generalization error in the statistical learning literature. We follow the proof technique of Lemma A.11 in \citeA{agarwal2019reinforcement} to bound the excess risk.

Fix $u=(\private{s_h},a)\in\calU \defeq \Spri\times\cA$, let $x_i^u=(\senv_h,\private{s_h},a)$ and $y_i^u=r(\senv_h,\private{s_h},a)+\max_{a'} f'(\senv_{h+1},\spri_{h+1},a')$ with $(x_i^u,y_i^u)\sim\nu$, $f^*(x_i^u)=\EE[y_i^u|x_i^u]$. Note that our goal is to bound $R(\hat f)=\frac{1}{|\calU|}\sum_{u\in \calU} \EE[(\hat f(x_i^u) - f^*(x_i^u))^2]$.

First note that
\begin{align*}
    \EE[(f(x_i^u) - y_i^u)^2 - (f^*(x_i^u) - y_i^u)^2] = \EE[(f(x_i^u) - f^*(x_i^u))^2]
\end{align*}
and 
\begin{align*}
    \VV^{u,f} &\defeq \VV[(f(x_i^u) - y_i^u)^2 - (f^*(x_i^u) - y_i^u)^2] \\
    & \leq \EE[((f(x_i^u) - y_i^u)^2 - (f^*(x_i^u) - y_i^u)^2)^2] \\
    & \leq 4\vmax^2 \EE[(f(x_i^u) - f^*(x_i^u))^2].
\end{align*}
We can bound the deviation from the mean for all $f\in\cF$ with one-sided Bernstein’s inequality: the following holds with probability $1-\zeta$,
\begin{align*}
    & \EE[(f(x_i^u) - f^*(x_i^u))^2] - \frac{1}{N}\sum_{i=1}^N [(f(x_i^u) - y_i^u)^2 - (f^*(x_i^u) - y_i^u)^2]  \\
    &\leq \sqrt{\frac{2\VV^{u,f}\ln{(|\cF|/\zeta)}}{N}} + \frac{4\vmax^2\ln{(|\cF|/\zeta)}}{3N} \\
    &\leq \sqrt{\frac{8\vmax^2\EE[(f(x_i^u) - f^*(x_i^u))^2]\ln{(|\cF|/\zeta)}}{N}} + \frac{4\vmax^2\ln{(|\cF|/\zeta)}}{3N}.
\end{align*}
We need this holds for all $u\in\calU$. By the union bound, we have for all $u\in\calU$
\begin{align*}
    & \EE[(f(x_i^u) - f^*(x_i^u))^2] - \frac{1}{N}\sum_{i=1}^N [(f(x_i^u) - y_i^u)^2 - (f^*(x_i^u) - y_i^u)^2] \\
    &\leq \sqrt{\frac{8\vmax^2\EE[(f(x_i^u) - f^*(x_i^u))^2]\ln{(|\cF||\calU|/\zeta)}}{N}} + \frac{4\vmax^2\ln{(|\cF||\calU|/\zeta)}}{3N}.
\end{align*}

Since $\hat f$ is the empirical minimizer, we have
\begin{align*}
    &\frac{1}{|\calU|} \sum_{u\in\calU} \frac{1}{N}\sum_{i=1}^N [(\hat f(x_i^u) - y_i^u)^2 - (f^*(x_i^u) - y_i^u)^2] \\
    &\leq \frac{1}{|\calU|} \sum_{u\in\calU} \frac{1}{N}\sum_{i=1}^N [(\tilde f(x_i^u) - y_i^u)^2 - (f^*(x_i^u) - y_i^u)^2].
\end{align*}
Since $\tilde f \in \cF$, we have
\begin{align*}
    &\frac{1}{N}\sum_{i=1}^N [(\tilde f(x_i^u) - y_i^u)^2 - (f^*(x_i^u) - y_i^u)^2] - \EE[(\tilde f(x_i^u) - f^*(x_i^u))^2]  \\
    &\leq \sqrt{\frac{8\vmax^2\EE[(\tilde f(x_i^u) - f^*(x_i^u))^2]\ln{(|\cF||\calU|/\zeta)}}{N}} + \frac{4\vmax^2\ln{(|\cF||\calU|/\zeta)}}{3N}.
\end{align*}
Since $\frac{1}{|\calU|}\sum_{u\in\calU} \EE[(\tilde f(x_i^u) - f^*(x_i^u))^2] \leq \varapx$, we have
\begin{align*}
    &\frac{1}{|\calU|}\sum_{u\in\calU} \frac{1}{N}\sum_{i=1}^N [(\tilde f(x_i^u) - y_i^u)^2 - (f^*(x_i^u) - y_i^u)^2] \\
    &\leq \varapx + \sqrt{\frac{8\vmax^2\varapx\ln{(|\cF||\calU|/\zeta)}}{N}} + \frac{4\vmax^2\ln{(|\cF||\calU|/\zeta)}}{3N} \\
    &\leq \frac{3}{2}\varapx + \frac{16\vmax^2\ln{(|\cF||\calU|/\zeta)}}{3N}.
\end{align*}
The second inequality follows by the fact that $\sqrt{ab} \leq \frac{a+b}{2}$ for $a,b \geq 0$.
Then,
\begin{align*}
    &\frac{1}{|\calU|}\sum_{u\in\calU} \EE[(\hat f(x_i^u) - f^*(x_i^u))^2] \\
    &\leq \sqrt{\frac{8\vmax^2\frac{1}{|\calU|}\sum_{u\in\calU}\EE[(\hat f(x_i) - f^*(x_i))^2]\ln{(|\cF||\calU|/\zeta)}}{N}} + \frac{20\vmax^2\ln{(|\cF||\calU|/\zeta)}}{3N} + \frac{3}{2}\varapx.
\end{align*}
Solving for the quadratic formula, we get
\begin{align*}
    \frac{1}{|\calU|}\sum_{u\in\calU} \EE[(\hat f(x_i^u) - f^*(x_i^u))^2] 
    \leq \frac{36\vmax^2\ln{(|\cF||\calU|/\zeta)}}{N} + 2\varapx.
\end{align*}

Now define $\tilde d^\pi_h(\senv)=\bbP^{\pi}_{M_b}(S^{\tenv}_h=\senv_h)$ and $d^\pi_h(\senv,\spri,a)=\bbP^{\pi}_{M_b}(S^{\tenv}_h=\senv_h,S^{\tpri}_h=\senv_h,A_h=a)$. By the construction of $M_b$, we have 
\begin{align*}
    &\forall \pi,\env{s_h}\in\env{S_h}, \frac{\tilde d^\pi_h(\senv)}{\tilde \nu_h(\senv)} \leq 1 \text{, and }\\
    &\forall \pi,\env{s_h}\in\env{S_h},\private{s_h}\in\private{S_h},a\in\cA,\frac{d^\pi_h(\senv,\spri,a)}{\tilde \nu_h(\senv)/|\Spri||\cA|} \leq |\Spri||\cA|.
\end{align*}
Therefore, for any policy $\pi$, we have with probability $1-\zeta$ 
\begin{align*}
    \norm{\bellman q_{h+1} - q_h}_{2,d^\pi_h} 
    &\leq \sqrt{|\Spri||\cA|} \norm{\bellman q_{h+1} - q_h}_{2,\tilde \nu_h} \\
    &\leq \sqrt{|\Spri||\cA|} (\sqrt{\frac{36\vmax^2\ln{(|\cF||\Spri||\cA|/\zeta)}}{N}+2\varapx}).
\end{align*}
Note that this holds for a fixed $h\in[H]$ and $f'\in\cF$. Use the union bound, we have that, for any policy $\pi$, $h\in[H]$ and $q_{h+1}\in\cF$, with probability $1-\zeta$
\begin{align*}
    \norm{\bellman q_{h+1} - q_h}_{2,d^\pi_h} \leq \sqrt{|\Spri||\cA|} (\sqrt{\frac{72\vmax^2\ln{(H|\cF||\Spri||\cA|/\zeta)}}{N} + 2\varapx}).
\end{align*}

By Lemma \ref{pdlemma}, the output policy $\pi$ satisfies
\begin{align*}
    &J(\pi^*_{M_b},M_b) - J(\pi,M_b) \\
    &\leq (H+1)H\sqrt{ |\Spri||\cA|} (\sqrt{\frac{72\vmax^2\ln{(H|\cF||\Spri||\cA|/\zeta)}}{N} + 2\varapx}) = \varepsilon_1.
\end{align*}

By Lemma \ref{lemma3}, the output policy $\pi$ satisfies
\begin{align*}
    &J(\pi^*_M,M) - J(\pi,M) \\
    &= J(\pi^*_M,M) - J(\pi^*_M,M_b) + J(\pi^*_M,M_b) - J(\pi,M) \\
    &\leq J(\pi^*_M,M) - J(\pi^*_M,M_b) + J(\pi^*_{M_b},M_b) - J(\pi,M) \\
    &\leq J(\pi^*_M,M) - J(\pi^*_M,M_b) + J(\pi,M_b) - J(\pi,M) + \varepsilon_1\\
     &\leq | J(\pi^*_M,M) - J(\pi^*_M,M_b)| + |J(\pi,M_b) - J(\pi,M)| + \varepsilon_1\\
    &\leq 2 \vmax H (\varair+\varepsilon_{P}) + (H+1)H\sqrt{ |\Spri||\cA|} (\sqrt{\frac{72\vmax^2\ln{(H|\cF||\Spri||\cA|/\zeta)}}{N} + 2\varapx}).
\end{align*}
Note that $\sqrt{|\Spri||\cA|}$ comes from the fact that we run FQI for all endogenous state and action uniformly. This term can be viewed as the concentration coefficient in the standard FQI analysis and is unavoidable. However, if we can adapt the weight on the endogenous state and action, we might be able to reduce the dependence.
\end{proof}

\theoremope*
\begin{proof}
    
    We first show that $R(\tau_D^{(i)})\defeq\sum_{t=0}^{H-1} r(s_t^{(i)}, a_t^{(i)})$ for $i\in[N]$ are i.i.d. samples with mean $J(\pi, M_b)$. Let $\bbP_D^\pi$ be the probability measure on trajectories $\tau_D$. Note that the randomness of $\tau_D$ comes from the generation of exogenous variables in by the interaction between $\pi_b$ and $M$, and the generation of actions and endogenous variables by $\pi$ and $\HPpri$.
    Let $\bbP_{M_b}^\pi$ be the probability measure on the trajectories sampled by running $\pi$ on $M_b$. It is sufficient to show that $\bbP_D^\pi=\bbP_{M_b}^\pi$ then $\EE[R(\tau_D)] = \int R(\tau) d\bbP_D^\pi(\tau) = \int R(\tau) d\bbP_{M_b}^\pi(\tau) = J(\pi, M_b)$.
    
    For all trajectories $\tau=(s_0,a_0,\dots,s_{H-1},a_{H-1})\in(\cS\times\cA)^H$, we have 
    \begin{align*}
        \bbP_D^\pi(\tau) 
        &= \bbP_{M}^{\pi_b}(s_0,\senv_1,\dots,\senv_{H-1}) \bbP^\pi_{D}(a_0,\spri_1,\dots,\spri_{H-1},a_{H-1} \mid s_0,\senv_1,\dots,\senv_{H-1}) \\ 
        &= \mu(s_0)\pi(a_0|s_0)\bbP^{\pi_b}_M(\env{s_1}|s_0)\HPpri(s_0,a_0,\spri_1) \pi(a_1|s_1) \dots \\
        &= \mu(s_0) \pi(a_0|s_0) \tilde P(s_0,a_0,s_1) \pi(a_1|s_1) \dots = \bbP_{M_b}^\pi(\tau).
    \end{align*}

    By Hoeffding's inequality, with probability at least $1-\zeta$,
    \begin{align*}
        \left|\hat J(\pi, M) - J(\pi, M_b)\right|
        &= \left|\frac{1}{N} \sum_{i=1}^N R(\tau_D^{(i)}) - \EE[R(\tau_D)]\right|
        \leq \vmax \sqrt{\frac{1}{2N} \ln{\frac{2}{\zeta}}}.
    \end{align*}
    
    Finally by Lemma \ref{lemma3}, the followings hold with probability at least $1-\zeta$:
    \begin{align*}
        |\hat J(\pi, M) - J(\pi, M)| 
        &\leq |\hat J(\pi, M) - J(\pi, M_b)| + |J(\pi, M_b) - J(\pi, M)| \\
        & \leq \vmax H (\epsilon_{AIR} + \epsilon_{P}) + \vmax \sqrt{\frac{\ln{(2/\zeta)}}{2N}}.
    \end{align*}
\end{proof}

\section{Comparing FQI-AIR to Online RL with Trajectory Simulation} \label{appendix:other_air}

\begin{figure}[h]
    \centering
    \subfigure[Optimal order execution]
    {
        \centering
         \includegraphics[width=0.3\textwidth]{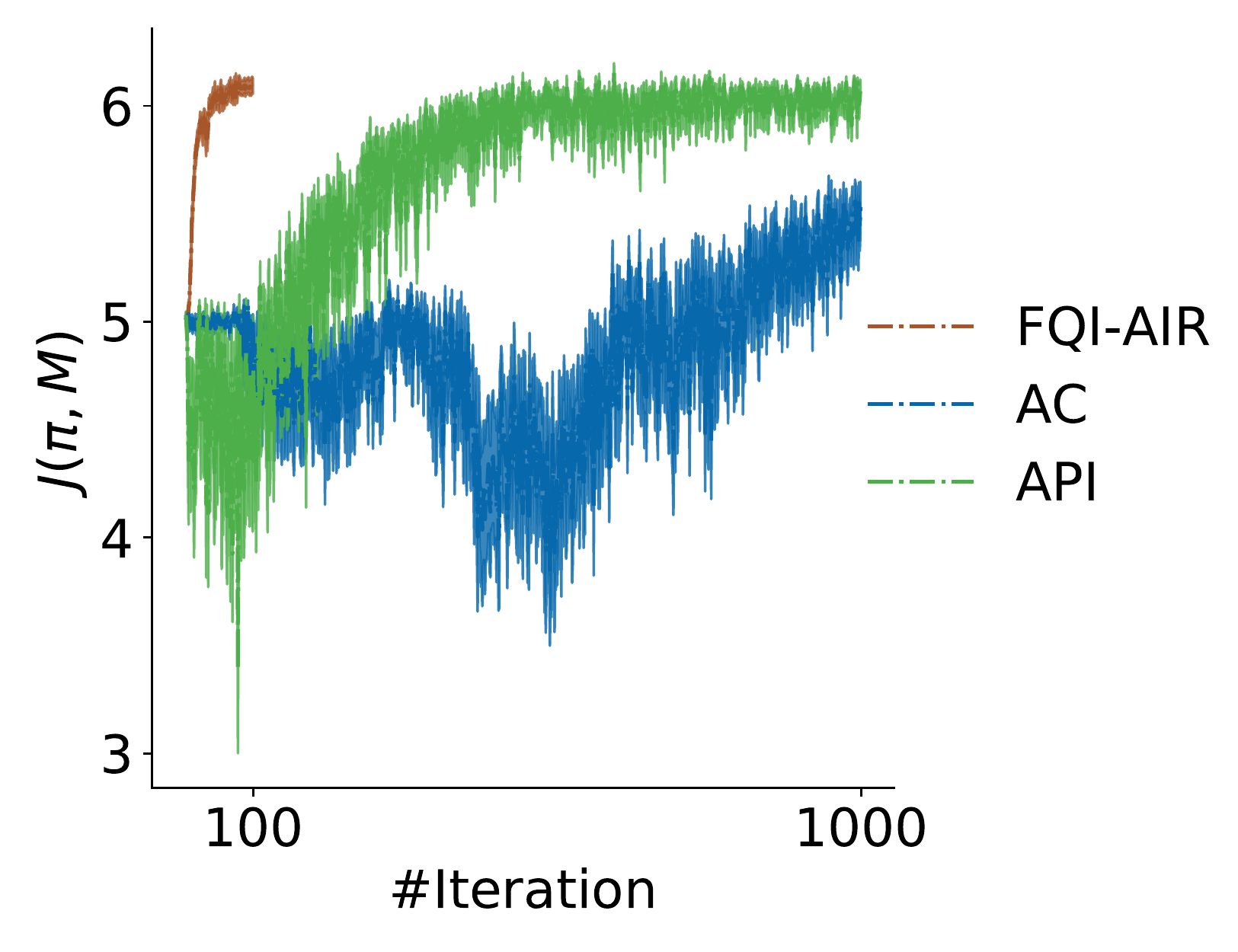}
    }
    \subfigure[Inventory management]
    {
        \centering
        \includegraphics[width=0.3\textwidth]{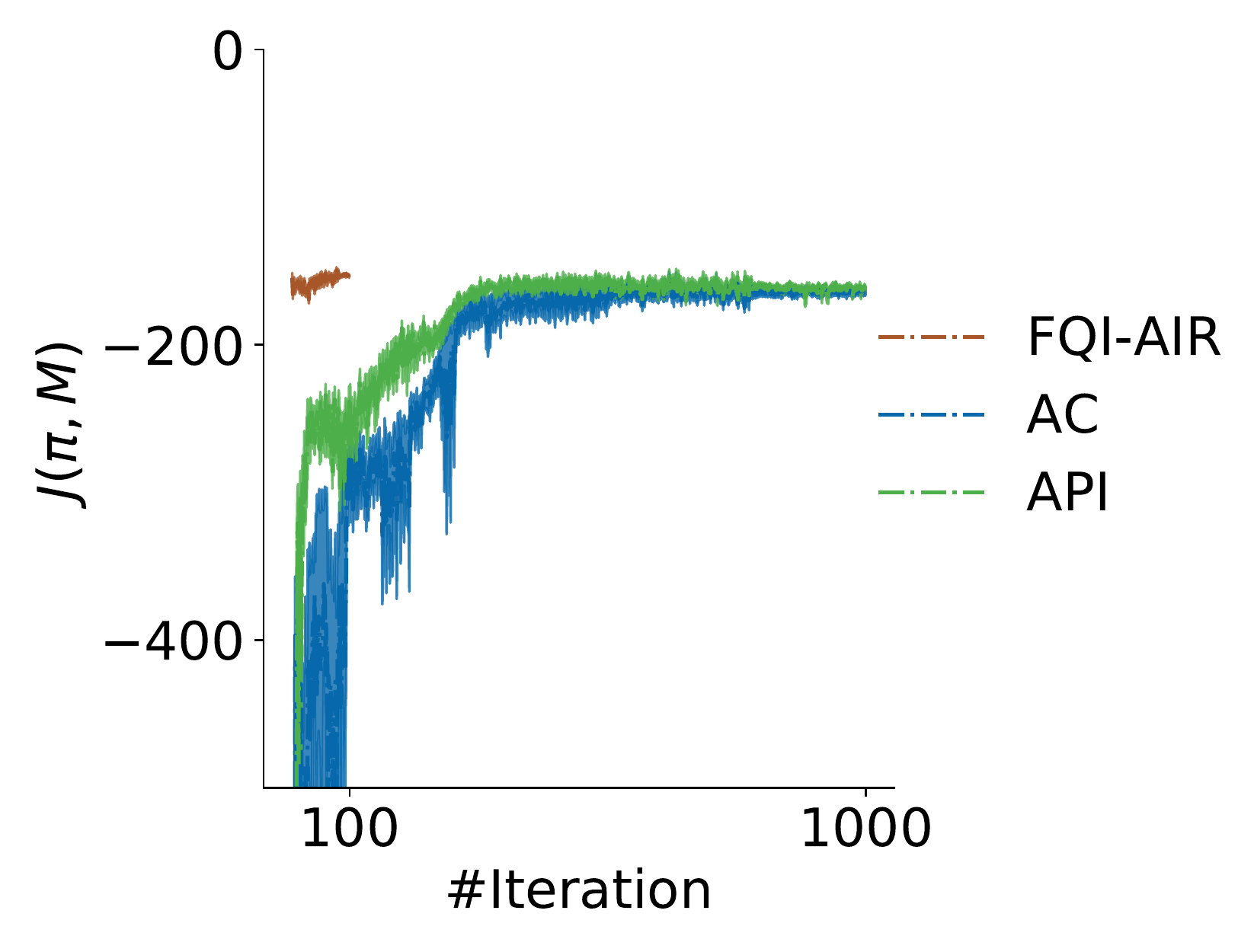}
    }
    \caption{Comparison to simulation-based algorithms. Results are averaged over 30 runs with the shaded region representing plus and minus one standard error.}
    \label{fig:simulation}
\end{figure}

We compare to two online RL algorithms, approximate policy iteration (API) and actor critic (AC) with $\varepsilon$-greedy exploration, with the trajectory simulator discussed in Section \ref{sec:algorithm}. We show the learning curves with random data collection policy and $N=100$. Each iteration contains a sweep over the entire dataset and we evaluate the greedy policy with respect to the Q function after each iteration. The results show that FQI-AIR converges to a nearly optimal solution within a few iterations while the online RL algorithms require much more iterations to find a good policy. AC in the optimal order execution problem does not converge to a stable performance. The final performance of API and AC are also worse than the performance of FQI-AIR.  
These results show that online RL algorithms could be used for AIR MDPs. However, they are less direct and efficient, and they could find a slightly different solution with finite data.

\vskip 0.2in
\bibliography{main}
\bibliographystyle{theapa}

\end{document}